\tikzset{%
    mynode/.style={%
        circle, radius=2pt, draw=darkgray, fill=white
    }
}
\def\blfootnote{\gdef\@thefnmark{}\@footnotetext}
  \newtheorem{theorem}{Theorem}[section]
  \newtheorem{proposition}[theorem]{Proposition}
  \newtheorem{lemma}[theorem]{Lemma}
  \newtheorem{definition}{Definition}[section]
  \newtheorem{example}{Example}[section] 
 \newtheorem{remark}{Remark}[section]
\newcommand{\MM}{\mathcal{M}}
\newcommand{\I}{\mathcal{I}}
\newcommand{\id}{\mathrm{I}}
\newcommand{\R}{\mathbb{R}}
\newcommand{\indep}{\perp \!\!\! \perp}
\newcommand{\minus}{\scalebox{0.75}[1.0]{$-$}}
\DeclareMathOperator*{\argmax}{arg\,max}
\DeclareMathOperator*{\argmin}{arg\,min}
\def\newop#1{\expandafter\def\csname #1\endcsname{\mathop{\rm
#1}\nolimits}}
\title{Learning Linear Gaussian Polytree Models with Interventions}
\author[1]{D. Tramontano}
\author[1]{L. Waldmann}
\author[1]{M. Drton} 
\author[2]{E. Duarte}
\affil[1]{Technical University of Munich; School of Computation, Information and Technology, Department of Mathematics and Munich Data Science Institute, Germany}
\affil[2]{Faculdade de Ciências, Universidade do Porto, Portugal}
\date{}
\begin{document}
\maketitle

\begin{abstract}
We present a consistent and highly scalable local approach to
learn the causal structure of a linear Gaussian
polytree using data from interventional experiments with known intervention targets. Our methods first
learn the skeleton of the polytree  and then orient its edges. The output is a CPDAG representing the interventional equivalence class of the
polytree of the true underlying distribution.
The skeleton and orientation recovery procedures we
use rely on second order statistics and low-dimensional marginal distributions.
We assess the performance of our methods
under different scenarios in synthetic data
sets and apply our algorithm to learn a polytree 
in a gene expression interventional data set. Our simulation studies demonstrate that our approach is fast, has good accuracy in terms of structural Hamming distance, and 
handles problems with thousands of nodes.
\end{abstract}

\blfootnote{“© 2023 IEEE.  Personal use of this material is permitted.  Permission from IEEE must be obtained for all other uses, in any current or future media, including reprinting/republishing this material for advertising or promotional purposes, creating new collective works, for resale or redistribution to servers or lists, or reuse of any copyrighted component of this work in other works.”}
\section{Introduction}\label{sec.intro}
The dominant approach in recent literature on causal discovery from interventional data is optimization of a model score.
Although the scoring is straightforward in the sense that the  optimization 
over DAGs 
refers to fully specified joint models for all
observational and interventional data, the optimization landscape is very high-dimensional, making score-based algorithms infeasible for graphs with hundreds/thousands of nodes that are common in biological applications. This makes causal discovery  difficult, in addition to many other challenges that remain such as
departing from restrictive genericity assumptions on the underlying distributions and developing methodology for high-dimensional settings. 
In this article, to address these challenges, we depart from a score-based strategy and leverage special properties of polytrees to
obtain a highly scalable ``local'' approach that learns from low-dimensional 
marginals. 

Our methods yield fast and consistent algorithms to learn linear Gaussian polytrees from interventional data. 
 The skeleton is learned by aggregating pairwise correlations from different experimental settings;  
 edge orientations are found by testing pairwise regression coefficients on suitable subsets of the data.  
This removes the need to form a score that contemplates joint 
models for all data.  Moreover, we allow the intervention targets to be arbitrary subsets (with no variable always intervened upon). We are not aware of any other work with these features. While it will be interesting to seek extensions to 
broader classes of graphs in future work, we stress that for
 very high-dimensional problems (e.g., \citet{dixit:2016}) it is of interest to target simpler computationally tractable objects that may be inferred reliably with moderate sample size \citep{edwards:2010}.  For this reason, polytrees have received renewed interest.

\section{Related work}
Directed acyclic graphs (DAGs) have been extensively used in causal modeling; the 
nodes of a graph represent the random variables of the model while the directed 
edges represent causal effects from source to sink. The effects of the parent nodes 
on the children are quantified by structural equations. Causal discovery is then 
the problem of inferring the graphical structure underlying the 
model. 

Approaches for causal discovery using only observational data and under the assumption that all variables in the model are directly
observed/measured, include 
constraint-based, score-based and hybrid methods (e.g., PC-algorithm 
\citep{sprites:caus:pred}, Greedy Equivalent Search (GES) \citep{chickering:2002}, 
Greedy SP algorithm \citep{solus:2017}).
Without extra assumptions on the data 
generating process, these methods learn a completed
partially directed graph (CPDAG), a mixed graph that encodes the causal information 
common to all the members of a Markov equivalence class (MEC). 
Classical constrained-based algorithms, such as the PC algorithm, 
can suffer from the elevated number of conditional independence tests that are needed to learn the CPDAG. A recent line of work \citep{akbari:2021,mokhtarian:2021,mokhtarian:2023}, which has proven to be almost optimal in terms of the number of conditional independence tests performed and is also applicable in the presence of unobserved variables, exploits the idea of learning the graph recursively starting from Markov boundary information.

Learning only the CPDAG is not always satisfactory as DAGs in the same MEC can have 
opposite causal interpretations. However, using additional assumptions such as non-
Gaussianity \citep{shimizu:hoyer:2006}  and/or non-linearity 
\citep{hoyer:additive:2008} it is possible to identify the 
complete causal structure. Whenever these assumptions do not apply, such as in the
linear Gaussian case, additional data from interventional experiments can help to
improve the identifiability of directed edges 
by refining the MEC.
This refinement is quantified in terms of interventional
Markov equivalence classes ($\I$-MECs) \citep{YKU18}. An $\I$-MEC is a collection of DAGs that entails the 
same interventional distributions for a fixed choice of intervention targets $\I$.

Within the causal discovery methods that use interventional data, there are those in which intervention targets are known such as Greedy Interventional Equivalent Search (GIES)
\citep{Hauser:20212}, Interventional GSP (IGSP) \citep{wang:2017}, and Joint Causal Inference \citep{Mooij:2020}. Other methods accommodate for unknown interventions
targets, for instance Differentiable Causal Discovery with Interventions (DCDI) \cite{brouillard:2020}, permutation based approaches \cite{squires:2020}, and Bayesian Causal Discovery with unknown Interventions (BaCaDI) \cite{hagele:2022}. For a recent review
of causal discovery methods we refer the reader to
\cite{squires:2022,zanga:2022}.

In this paper, we address the problem of causal discovery when
both observational and interventional data are available and all variables are measured. 
Our focus is on linear Gaussian structural causal models in which the graph is a polytree and the 
intervention targets are known. As shown computationally in \citep{acid:1995},
the polytree assumption 
provides an effective compromise between computational complexity and model expressiveness.  This 
property of polytrees has been effectively exploited in image segmentation \citep{fehri:2019}, 
hardware optimization \citep{zaveri:2010}, and Ozone prediction \citep{sucar:1997}.  
The polytree assumption follows a recent paradigm in the causal discovery literature in which assumptions are  made about the DAG underlying the causal model in order to reduce the complexity of learning algorithms.  Other methods that follow a similar approach include the causal additive trees
(CAT) method which assumes the underlying DAG is a tree \citep{jakobsen:2022} and the method from \cite{mokhtarian:2022} which
incorporates side information such as the assumption that the ground truth is a diamond free graph or  that an upper bound on the clique number of the graph is known.

The polytree assumption has been studied since the early days of causal reasoning theory. Indeed,
\citet{rebane:pearl:1987} use the Chow-Liu algorithm \citep{chow:1968} to learn the skeleton of a polytree. 
Different variants of the Rebane and Pearl approach that work under different sets of assumptions have been developed in \citep{lou:2021,tramontano:2022,chatterjee:2022}, a linear programming algorithm is developed in \citep{linusson:2022}, while in \cite{Azadkia:2021} the graph is assumed to be locally a polytree around a targeted node allowing to infer the directed causes of the target node. Both \citet{etesami:2016}, in the context of time series graphs, and \citet{sepher:2019} in the context of classical graphical models, introduce a notion of minimality for polytrees with hidden nodes and provide an algorithm for learning the graph under the assumption of minimality. Polytree learning has been proven to be an NP-hard problem in \citep{dasgupta:2013}, the complexity of the problem is studied in full details in \cite{Gruttemeier:2021}. 
In \cite{amendola:2021}, a complete characterization is given for the constraints that emerge between 2nd and 3rd order moments of a random vector that follows a polytree-based linear structural equation model with non-Gaussian error terms.

\section{Preliminaries}

\subsection{Notation for Graphs} \noindent
A \emph{directed graph} is a pair $G=(V,E)$, where $V$ is the set of vertices and $E\subset \{(u,v): u, v\in V,\, u\neq v\}$ is the set of directed edges. We denote a pair $(u,v)\in E$ also by $u\to v$. 
A \emph{walk} from node $v$ to node $w$ in $G$ is an alternating sequence $(v_0,e_1,v_1,e_2,\dots,v_{k-1},e_k,v_k)$ consisting of nodes and edges of $G$ such that $v_0=v$, $v_k=w$, and $e_l\in\{(v_{l-1},v_l),(v_{l},v_{l-1})\}$ for $l=1,\dots,k$. A walk is a \emph{directed path} if $e_i=(v_{i-1},v_{i})$ for all $i\in \{1,\dots,k\}$ and a \emph{directed cycle} if additionally 
$v_0=v_k$.  
From now on we assume that the graph $G$ is a \emph{DAG} (directed acyclic graph), i.e., $G$ does not contain any directed cycles.

 A node $v_l$ is a \emph{collider} on a walk as above if $e_{l-1}=(v_{l-1},v_l)$ and $e_l=(v_{l+1},v_l)$. Moreover, $v_l$ is an \emph{unshielded collider} if neither $(v_{l-1},v_{l+1})$ or $(v_{l+1},v_{l-1})$ belongs to $E$. 

A walk that does not contain a collider is called a \emph{trek} from $v$ to $w$.  Every trek contains a unique node $v_l$ that splits the trek into two directed paths from $v_l$ to $v$  and from $v_l$ to $w$, respectively.  This node is the \emph{top} of the trek.  Note that the top may be equal to $v$ or $w$, in which case one of the two directed paths is trivial consisting of a single node and no edge.  A trek is \emph{simple} if it does not contain repeated nodes.

If $u\to v\in E$, then $u$ is a \emph{parent} of $v$, and $v$ is a \emph{child} of $u$. If $G$ contains a directed path from $u$ to $v$, then $u$ is an \emph{ancestor} of $v$ and $v$ is a \emph{descendant} of $u$. The set of parents, children, ancestors, and descendants of $u$ are denoted by $\pa(u),\ch(u),\an(u),\de(u)$, respectively.
The \emph{skeleton} of a DAG is the undirected graph obtained by replacing each edge $(u,v)$, by an undirected edge, denoted here by $\{u,v\}$.

A \emph{mixed graph} is a triple $G = (V, E, U)$, where $E$ is the set of directed edges defined as above, and $U\subset E\subset \{\{u,v\}: u, v\in V,\, u\neq v\}$ is the set of undirected edges. We assume all the graphs we consider to be \emph{simple}, i.e.,  there is at most one edge, directed or undirected, between any two vertices.

\subsection{Linear Structural Causal Models} \label{subsec.causal.poly}
\noindent
 Let $X=(X_u)_{u\in V}$ be a random vector indexed by the vertices of a DAG $G$. For $A\subset V$, let $X_A=(X_u)_{u\in A}$.  
 When $X_A$ is conditionally independent of $X_B$ given $X_C$ for disjoint subsets $A,B,C\subset V$, we write $A\indep B|\,C$.
 The joint distribution of $X$ satisfies the \emph{local Markov property} with respect to $G$ if
$
    \{i\} \indep [p]\setminus(\pa(i)\cup \de(i))\ |\ \pa(i)\ \forall\ i\in[p].
$
The Markov equivalence class of $G$ is the set of all DAGs that encode the same conditional independence relations, i.e., for which the set of distributions satisfying the local Markov property is the same.  See \cite[Chap.~1]{handbook} for further details.

The Gaussian structural causal model given by $G$ postulates that
\begin{align}
\label{eq.structural.equations}
    X_v &= \sum_{w\in\pa(v)} \lambda_{wv} X_w + \varepsilon_v, \qquad v\in V,
\end{align}
where the edge coefficients $\lambda_{wv}\in\mathbb{R}$ are unknown parameters and the errors $(\varepsilon_v)_{v\in V}$ are independent Gaussian random variables.  Each error is assumed to have mean zero and unknown variance $\omega_v>0$; in symbols, $\varepsilon_v\sim\mathcal{N}(0,\omega_v)$.
Let $\Lambda\in\mathbb{R}^{V\times V}$ be the matrix of edge coefficients, with zeros filled in at non-edges.  Let $\Omega=\diag((\omega_v)_{v\in V})\in\mathbb{R}^{V\times V}$ be the diagonal covariance matrix of $\varepsilon=(\varepsilon_v)_{v\in V}$.  Solving~\eqref{eq.structural.equations}, we obtain that $X=(\id-\Lambda^T)^{-1}\varepsilon$ is a Gaussian random vector with covariance matrix
\begin{align}
   \label{eq.Sigma}
   \Sigma:= \Var[X] \;=\; (\id-\Lambda^T)^{-1}\Omega(\id-\Lambda)^{-1},
\end{align}
we stress that the matrix $(\id-\Lambda)$ is always invertible when the graph $G$ is acylic, this is because the matrix $\Lambda$ is strictly lower triangular, so the determinant of $(\id-\Lambda)$ is one.

As presented, $X$ is modeled to have mean zero.
This is without of loss of generality for the later results which solely pertain to the covariance structure.
In the sequel, we denote the space of matrices supported on the edge set of $G$ as
\[
\mathbb{R}^E=\{\Lambda\in\mathbb{R}^{V\times V}: (v,w)\notin E\implies \Lambda_{vw}=0\}.
\]
We write $D_+$ for the set of diagonal matrices in $\R^{V\times V} $ with positive diagonal entries.
The covariance model induced by the DAG $G$ is  the set of positive definite matrices
\begin{align*}
\mathcal{M}(G) \::=\:
\left\{ (\id-\Lambda^T)^{-1}\Omega(\id-\Lambda)^{-1} : \Lambda\in\mathbb{R}^E,\; \Omega\in D_+\right\}.
\end{align*}
Two graphs $G_1$ and $G_2$ are \emph{Markov equivalent} if and only if $\mathcal{M}(G_1)=\mathcal{M}(G_2)$ (see e.g.~\cite[Thm.~8.13]{richardson:2002}).
Combinatorially, the MEC is represented
by its CPDAG
\cite[Chap.~1]{handbook}.

Writing explicitly the entries of the a covariance matrix $\Sigma\in\mathcal{M}(G)$ using Eq.~\eqref{eq.Sigma}, one can get two useful parametrizations for the set $\mathcal{M}(G)$, we report the two parametrizations here, and refer to \cite{sullivant:2010} and the references therein for further details.
\begin{proposition}[Trek-rule]
\label{prop.trek.rule}
  Let $\mathcal{T}(v,w)$ be the set of all treks from $v$ to $w$.  The matrix $\Sigma$ from Eq.~\eqref{eq.Sigma} has its entries 
  \[
  \Sigma_{vw} = \sum_{\tau\in\mathcal{T}(v,w)} \omega_{\top(\tau)}\prod_{e\in\tau} \lambda_e,  \quad v,w\in V.
  \]
  Moreover, the entries of $\Sigma$ satisfy the recursive relation
  \[
  \Sigma_{vw} = \sum_{\tau\in\mathcal{S}(v,w)} \Sigma_{\top(\tau),\top(\tau)}\prod_{e\in\tau} \lambda_e,  \quad v,w\in V,
  \]
  where $\mathcal{S}(v,w)$ is the set of simple treks from $v$ to $w$.
\end{proposition}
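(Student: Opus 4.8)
The plan is to prove the two identities in turn, deriving the standard trek rule from the Neumann-series expansion of $(\id-\Lambda)^{-1}$, and then obtaining the recursion as a purely combinatorial reorganization of the first. For the standard trek rule I would start from $X=(\id-\Lambda^T)^{-1}\varepsilon$ together with the fact that $\Lambda$ is nilpotent (being strictly triangular with respect to any topological order of $G$), so that the geometric series $(\id-\Lambda)^{-1}=\sum_{k\ge 0}\Lambda^k$ terminates. Writing $\Phi:=(\id-\Lambda)^{-1}$, the entry $\Phi_{tv}=\sum_k[\Lambda^k]_{tv}$ expands as a sum over sequences $t=u_0,u_1,\dots,u_k=v$ of products $\prod_i \Lambda_{u_{i-1}u_i}$; since $\Lambda_{u_{i-1}u_i}\neq 0$ forces $u_{i-1}\to u_i\in E$, this is exactly $\sum_{P}\prod_{e\in P}\lambda_e$ summed over directed paths $P$ from $t$ to $v$ (all simple, because $G$ is acyclic). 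Substituting into $\Sigma=\Phi^T\Omega\Phi$ gives $\Sigma_{vw}=\sum_{t\in V}\omega_t\,\Phi_{tv}\Phi_{tw}$. Expanding the product $\Phi_{tv}\Phi_{tw}$ into a double sum over pairs $(P,Q)$ of directed paths $P\colon t\leadsto v$ and $Q\colon t\leadsto w$, and recalling that every trek from $v$ to $w$ splits uniquely at its top into precisely such a pair, I would identify each pair $(P,Q)$ with a trek $\tau$ of top $t$ satisfying $\prod_{e\in\tau}\lambda_e=\prod_{e\in P}\lambda_e\prod_{e\in Q}\lambda_e$. Summing over $t$ yields the first display.

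For the recursion the idea is to set up a weight-preserving bijection between $\mathcal{T}(v,w)$ and the pairs $(\sigma,\rho)$ with $\sigma\in\mathcal{S}(v,w)$ and $\rho\in\mathcal{T}(\top(\sigma),\top(\sigma))$. Given a trek $\tau\in\mathcal{T}(v,w)$ with top $t$ and arms $P\colon t\leadsto v$, $Q\colon t\leadsto w$, I would let $s$ be the \emph{deepest common node} of $P$ and $Q$, i.e.\ the last node lying on both arms as one travels away from $t$. Cutting both arms at $s$ decomposes $\tau$ into an upper piece $\rho$ (the two $t\leadsto s$ segments, forming a trek from $s$ to $s$ with top $t$) and a lower piece $\sigma$ (the two $s\leadsto v$ and $s\leadsto w$ segments). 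Because $s$ is the deepest common node, the two tails meet only at $s$, so $\sigma$ is a simple trek from $v$ to $w$ with top $s$; moreover the edge products factor as $\prod_{e\in\tau}\lambda_e=\prod_{e\in\sigma}\lambda_e\prod_{e\in\rho}\lambda_e$ while $\top(\tau)=\top(\rho)$. Feeding this decomposition into the first display applied to $\Sigma_{ss}$, I would compute
\[
\Sigma_{vw}=\sum_{\sigma\in\mathcal{S}(v,w)}\Big(\prod_{e\in\sigma}\lambda_e\Big)\sum_{\rho\in\mathcal{T}(s,s)}\omega_{\top(\rho)}\prod_{e\in\rho}\lambda_e=\sum_{\sigma\in\mathcal{S}(v,w)}\Sigma_{ss}\prod_{e\in\sigma}\lambda_e,
\]
where $s=\top(\sigma)$, which is the claimed recursion.

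The main obstacle is verifying that this decomposition is genuinely a bijection, and here acyclicity does all the work. First, the common nodes of $P$ and $Q$ are totally ordered by ancestry (if two of them appeared in opposite orders on the two arms they would close a directed cycle), so the deepest common node $s$ is well defined. Second, for the inverse map I must check that gluing an arbitrary $\rho\in\mathcal{T}(s,s)$ above an arbitrary $\sigma\in\mathcal{S}(v,w)$ with top $s$ reproduces a trek whose deepest common node is again $s$: every internal node of $\sigma$'s arms is a strict descendant of $s$, whereas every node of $\rho$'s arms is an ancestor of $s$, so the two pieces can share no node other than $s$ and no spurious common node below $s$ is created. I would write out these two acyclicity arguments carefully, since they are exactly what guarantees both that $\sigma$ is simple and that the map is invertible; the remaining bookkeeping of the edge-coefficient and variance weights is then routine.
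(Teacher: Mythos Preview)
The paper does not supply its own proof of this proposition; it merely states the two formulas and refers the reader to \cite{sullivant:2010} and the references therein. Your argument is correct and self-contained. The Neumann-series derivation of the first identity is the standard one, and your bijection for the second identity---splitting an arbitrary trek at the deepest common node of its two arms into a simple trek below and a closed trek above---is sound: acyclicity guarantees both that the common nodes of the two arms are totally ordered by ancestry (so the deepest one is well defined) and that gluing an arbitrary $\rho\in\mathcal{T}(s,s)$ onto $\sigma\in\mathcal{S}(v,w)$ cannot create a common node strictly below $s$, since every node of $\rho$ is an ancestor of $s$ while every interior node of $\sigma$ is a strict descendant. There is nothing in the paper to compare your approach against.
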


Let $\Sigma=(\sigma_{vw})$ be the covariance matrix of a random vector $X$, with diagonal entries $\sigma_{vv}>0$.  The correlation matrix $R(\Sigma)=(\rho_{vw})$ of $\Sigma$ is the matrix with entries $\rho_{vw}=\sigma_{vw}/\sqrt{\sigma_{vv}\sigma_{ww}}$.  It
is also the covariance matrix of the standardized random vector $(X_v/\sqrt{\sigma_{vv}})_{v\in V}$.

\begin{proposition}[Correlation matrices]
\label{prop.correlations}
  If $\Sigma$ is a covariance matrix in $\mathcal{M}(G)$, then its correlation matrix $R(\Sigma)$ is also in $\mathcal{M}(G)$.  Hence, there exists $\Lambda=(\lambda_{vw})\in\mathbb{R}^E$ such that 
  \[
  R(\Sigma)_{vw} = \sum_{\tau\in\mathcal{S}(v,w)} \prod_{e\in\tau} \lambda_e,  \quad v,w\in V.
  \]
\end{proposition}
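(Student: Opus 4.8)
The plan is to prove the two assertions in turn: first that $R(\Sigma)\in\mathcal{M}(G)$, and then to obtain the trek expansion as a direct consequence of Proposition~\ref{prop.trek.rule}. Since $\Sigma\in\mathcal{M}(G)$, fix a representation $\Sigma=(\id-\Lambda^T)^{-1}\Omega(\id-\Lambda)^{-1}$ with $\Lambda\in\mathbb{R}^E$ and $\Omega\in D_+$. Let $D:=\diag((\sigma_{vv})_{v\in V})$; because the diagonal entries $\sigma_{vv}$ are strictly positive, $D^{1/2}:=\diag((\sqrt{\sigma_{vv}})_{v\in V})$ is well defined and invertible, and $R(\Sigma)=D^{-1/2}\Sigma D^{-1/2}$. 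The whole argument is then a single conjugation by $D^{1/2}$.

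The key step is to guess the rescaled parameters. I would set
\[
\tilde\Lambda:=D^{1/2}\Lambda D^{-1/2},\qquad \tilde\Omega:=D^{-1/2}\Omega D^{-1/2},
\]
and verify the factorization $R(\Sigma)=(\id-\tilde\Lambda^T)^{-1}\tilde\Omega(\id-\tilde\Lambda)^{-1}$. This reduces to inserting $\id=D^{1/2}D^{-1/2}$ around $\Omega$ in $R(\Sigma)=D^{-1/2}(\id-\Lambda^T)^{-1}\Omega(\id-\Lambda)^{-1}D^{-1/2}$ and using the two identities $D^{1/2}(\id-\Lambda)D^{-1/2}=\id-\tilde\Lambda$ and $D^{-1/2}(\id-\Lambda^T)D^{1/2}=\id-\tilde\Lambda^T$, which follow because $D^{1/2}$ commutes with $\id$ and $(D^{1/2}\Lambda D^{-1/2})^T=D^{-1/2}\Lambda^T D^{1/2}$. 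Taking inverses then yields exactly the desired form.

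To conclude that $R(\Sigma)\in\mathcal{M}(G)$, two small checks remain, and the first is really the only point that requires care: one must confirm that $\tilde\Lambda\in\mathbb{R}^E$. This holds because conjugation by a diagonal matrix only rescales each entry, $(\tilde\Lambda)_{vw}=\sqrt{\sigma_{vv}}\,\Lambda_{vw}/\sqrt{\sigma_{ww}}$, so the support is unchanged and no new edges are created; this is precisely why the correlation matrix stays inside the same model rather than a larger one. The second check is immediate: $\tilde\Omega$ is diagonal with entries $\omega_v/\sigma_{vv}>0$, hence $\tilde\Omega\in D_+$. (Equivalently, and perhaps worth a remark, this rescaling is exactly the structural model satisfied by the standardized variables $X_v/\sqrt{\sigma_{vv}}$.)

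For the final ``Hence'' claim, I would simply apply the recursive trek-rule of Proposition~\ref{prop.trek.rule} to the matrix $R(\Sigma)\in\mathcal{M}(G)$ with parameter $\tilde\Lambda$, giving
\[
R(\Sigma)_{vw}=\sum_{\tau\in\mathcal{S}(v,w)} R(\Sigma)_{\top(\tau),\top(\tau)}\prod_{e\in\tau}\tilde\lambda_e.
\]
Since $R(\Sigma)$ is a correlation matrix, every diagonal entry $R(\Sigma)_{\top(\tau),\top(\tau)}$ equals $1$, so these factors drop out and the formula collapses to $R(\Sigma)_{vw}=\sum_{\tau\in\mathcal{S}(v,w)}\prod_{e\in\tau}\tilde\lambda_e$, which is the stated identity with $\Lambda=\tilde\Lambda$. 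I expect no genuine obstacle here; the only thing to watch throughout is the support-preservation claim for $\tilde\Lambda$, which is what makes the argument work for $\mathcal{M}(G)$ specifically.
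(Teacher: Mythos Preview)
Your proof is correct and follows essentially the same approach as the paper: the paper also rescales the parameters to $\Lambda_{vw}\sqrt{\Sigma_{vv}}/\sqrt{\Sigma_{ww}}$ and $\Omega_{vv}/\Sigma_{vv}$ (i.e., exactly your $\tilde\Lambda=D^{1/2}\Lambda D^{-1/2}$ and $\tilde\Omega=D^{-1/2}\Omega D^{-1/2}$) and then invokes the simple trek-rule together with $R(\Sigma)_{vv}=1$. You have simply written out the conjugation-by-$D^{1/2}$ verification in more detail than the paper does.
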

\begin{proof}
  Write $\Sigma=(\id-\Lambda')^{-T}\Omega'(\id-\Lambda')^{-1}$ for $\Lambda'\in\mathbb{R}^E$ and $\Omega'\in D_+$.  Then $R(\Sigma)=(\id-\Lambda^T)^{-1}\Omega(\id-\Lambda)^{-1}$, where the entries of $\Lambda\in\mathbb{R}^E$ are $\Lambda_{vw}=\Lambda'_{vw}\sqrt{\Sigma_{vv}}/\sqrt{\Sigma_{ww}}$ and those of $\Omega\in D_+$ are $\Omega_{vv}=\Omega'_{vv}/\Sigma_{vv}$.
  
  The second assertion follows from the simple trek-rule in Proposition~\ref{prop.trek.rule}, observing that $R(\Sigma)_{vv}=1$ for all $v$.
\end{proof}

\subsection{Interventions}\noindent
\label{subsec.ICPDAG}
It is useful to formally consider  the collection of linear Gaussian structural
causal models that arise from $G$ and a set of interventions.
 In a \emph{soft intervention}, a subset $I\subset V$ of target nodes is selected, and for each $v\in I$ the conditional distribution of $X_v$ given $X_{\pa(v)}$ is modified. When $X_v$ is made independent of its parents, the intervention is 
 \emph{perfect}.  It is common that different interventions are performed and hence we have a collection of intervention targets denoted by $\I$, so $\I\subseteq 2^V$. Without loss of generality we assume that $\emptyset \in \I$ and refer to this as the observational experiment.  Theorem 3.14 in \citet{YKU18} justifies this assumption, which subsumes the case of conservative intervention targets treated by \citet{Hauser:20212}.

We assume that each interventional experiment obeys a linear Gaussian structural causal model defined by $G$ (recall Section~\ref{subsec.causal.poly}).
Namely, for each $I\in \I$, we have a random vector  $X^{I}:=(X_v^{(I)})_{v\in V}$ with structural equations
\begin{align}
\label{eq:struct}
X_{v}^{(I)}=\sum_{w\in \pa(v)}\lambda_{wv}^{(I)}X_{w}^{(I)}+\varepsilon_{v}^{(I)}, \qquad v\in V.
\end{align}
We use $\Lambda^{(I)}$ to denote the matrix of edge coefficients for this
model and $\Omega^{(I)}=\diag((\omega_{v}^{(I)})_{v\in V})$ to denote
the covariance matrix of the Gaussian vector $\varepsilon^{(I)}:=(\varepsilon_v^{(I)})_{v\in V}$. To encode the invariances of the structural equations of nodes that are not intervened on, i.e are not in $I$, we impose that $\lambda_{wv}^{(I)}=\lambda_{wv}^{(\emptyset)}$ and
$\omega_{v}^{(I)}=\omega_{v}^{(\emptyset)}$ whenever
$v\notin I$.
Each $I$  induces a covariance model
\[
\MM(G,I)=\{\Sigma^{(I)}:\Lambda^{(I)} \in \mathbb{R}^{E},
\Omega^{(I)} \in D_{+}\}
\]
where $\Sigma^{(I)}= (\id-(\Lambda^{(I)})^{T})^{-1}\Omega^{(I)}(\id-\Lambda^{(I)})^{-1}$  and
$ \Lambda^{(I)},\Omega^{(I)}$ satisfy the invariances, on edge coefficients and error variances, of nodes that are not intervened on. 

The interventional DAG model specified by $G$ and the set of intervention targets
$\I$ is the set
\[
\MM_{\I}(G):=\{(\Sigma^{(I)})_{I\in \I}: \Sigma^{(I)}\in \MM(G,I), I\in \I\};
\]
it consists of tuples
 of covariance matrices of length $|\I|$ that may arise by performing interventions according to $\I$.

Two DAGs $G_1,G_2$ are in the same $\I$-\emph{Markov equivalence class}, $\I$-MEC, if and only if $\MM_{\I}(G_1)=\MM_{\I}(G_2)$.
To decide if two DAGs are in the same $\I$-MEC, \citet{YKU18} introduce the notion of an $\I$-DAG:
\begin{definition}[$\I$-DAG] Fix a collection of interventions $\I$ and a DAG $G$, the $\I$-DAG $G^\I$ is the graph $G$ augmented with $\I$-vertices $\{\zeta_I\}_{\emptyset\neq I\in\I}$, and the $\I$-edges $\{\zeta_I\to u\}_{u\in I,I\in\I}$.
\end{definition}
The following theorem provides a concise graphical representation of the $\I$-\emph{Markov equivalence classes}.
\begin{theorem}{\cite[Thm.~3.14]{YKU18}}
Let $\I$ be a conservative set of intervention targets. Two DAGs $G_1,G_2$ are in the same $\I$-MEC
iff for all $I\in\I$ the $\I$-DAGs $G_1^{\Tilde{\I}_I}$ and $G_2^{\Tilde{\I}_I}$ have the same skeleton and the same v-structures, where 
\begin{equation*}
\Tilde{\I}_I=\{\emptyset\} \cup \{I\cup J\}_{I,J\in\I, I\neq J}.
\end{equation*}
\end{theorem}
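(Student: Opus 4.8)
The statement is a characterization of $\I$-Markov equivalence, so the plan is to reduce the equality of the two interventional Gaussian models to an equality of conditional independence structures, and then to the classical graphical characterization of Markov equivalence via skeleta and v-structures. First I would argue that $\MM_\I(G_1)=\MM_\I(G_2)$ holds if and only if a generic tuple $(\Sigma^{(I)})_{I\in\I}$ in one model satisfies exactly the same polynomial, hence conditional-independence, constraints as a generic tuple in the other. Because the Gaussian parametrization is rational and faithfulness holds for generic choices of $(\Lambda^{(I)},\Omega^{(I)})$, model equality is equivalent to the two families entailing the same $\I$-Markov relations: within each experiment $I$ the usual d-separations of $G$, together with the cross-experiment invariance statements forced by $\lambda^{(I)}_{wv}=\lambda^{(\emptyset)}_{wv}$ and $\omega^{(I)}_v=\omega^{(\emptyset)}_v$ for $v\notin I$. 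The conservativeness hypothesis (here ensured by $\emptyset\in\I$) is what guarantees every mechanism is observed in some experiment, so that each such constraint is actually detectable and no edge is lost in this translation.

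Second, I would encode these $\I$-Markov relations as ordinary d-separations in an augmented graph. The idea is to treat the experiment label as a collection of extra parent nodes $\zeta_I$, one per nonempty target, pointing into the intervened variables; a within- or cross-experiment independence then corresponds to a d-separation in the $\I$-DAG. The invariance of the mechanism of a node $v$ across two experiments translates into the $\zeta$-nodes being d-separated from $X_v$ given the appropriate parents, and it is precisely here that the union targets matter: the mechanism of $v$ can differ between experiments $I$ and $J$ only if $v$ lies in their symmetric difference, which is governed by $I\cup J$. This motivates packaging the pairwise comparisons into the derived families $\tilde\I_I=\{\emptyset\}\cup\{I\cup J\}_{J\neq I}$ and reading the relevant separations off the augmented graphs $G^{\tilde\I_I}$.

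Finally, with the $\I$-Markov structure expressed through d-separation in the augmented DAGs, I would invoke the classical theorem that two DAGs induce the same d-separation relations if and only if they share the same skeleton and the same unshielded v-structures, applied to each pair $G_1^{\tilde\I_I}$ and $G_2^{\tilde\I_I}$. Running this equivalence over all $I\in\I$ then yields the stated characterization. The main obstacle I anticipate is not the final appeal to the Verma--Pearl characterization but the bookkeeping in the middle step: verifying that the family $\{\tilde\I_I\}_{I\in\I}$ of pairwise unions captures \emph{all} and \emph{only} the invariance constraints that distinguish the interventional models, i.e. that no genuine distinction between $\MM_\I(G_1)$ and $\MM_\I(G_2)$ escapes the skeleton and v-structure comparison on these particular augmentations, and conversely that these augmentations do not manufacture spurious distinctions among the $\zeta$-nodes themselves. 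Establishing this correspondence carefully, and handling the genericity argument that licenses passing from model equality to conditional-independence equality, is where the real work lies.
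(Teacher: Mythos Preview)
The paper does not prove this theorem at all: it is stated with an explicit citation to \cite[Thm.~3.14]{YKU18} and is used as a black box to justify the $\I$-CPDAG construction. There is therefore no ``paper's own proof'' against which to compare your proposal.

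That said, your sketch is a reasonable outline of the argument actually carried out in \cite{YKU18}: define an $\I$-Markov property capturing both within-experiment d-separations and cross-experiment invariances, show it is encoded by d-separation in the augmented $\I$-DAGs, and then apply the Verma--Pearl skeleton/v-structure characterization to those augmentations. You are also right that the delicate part is the middle step, namely showing that the pairwise-union families $\tilde\I_I$ capture exactly the invariance constraints and nothing more; in \cite{YKU18} this is handled by an explicit soundness-and-completeness argument for the $\I$-Markov property relative to d-separation in $G^{\tilde\I_I}$, together with a faithfulness/genericity assumption. Your plan to pass from model equality $\MM_\I(G_1)=\MM_\I(G_2)$ to equality of entailed conditional independences via generic faithfulness of the Gaussian parametrization is the right bridge, though note that \cite{YKU18} works at the level of general interventional distributions and an abstract $\I$-Markov property rather than specifically through the Gaussian covariance models used in the present paper.
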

The $\I$-MEC of a DAG $G$ can be represented uniquely by its $\I$-CPDAG, this provides a combinatorial representation of the causal information that we can extract from the interventional data available, often also referred to as the
$\I$-essential graph in the literature. This
is a mixed graph with the same skeleton as $G$, a directed edge, $(u,v)$, if every member of
the $\I$-MEC has that edge with the same orientation, and an undirected edge, $\{u,v\}$, if
there are two DAGs in the $\I$-MEC for which that edge has opposite orientations.
See \citet[Thm.~18]{Hauser:20212} for details in the setting of perfect interventions,
the same construction carries over to the setting of general interventions \cite{YKU18}.

\section{Learning Causal Models on Polytrees with Interventions} \label{subsec.polytrees}
From now on we assume that the DAG $G$ is a polytree, this means that
the skeleton of $G$ is a tree, i.e., a graph in which there is exactly one path between any two nodes.

Our procedure first learns the skeleton.
For this purpose we create a novel interventional version of the 
Chow-Liu algorithm \cite{chow:1968}.
The challenge here lies in constructing a weight matrix that reveals the tree structure, the same way
the correlation matrix on a single observational data set does. As seen in Fig.~\ref{fig.skeleton.flip.intervention}, 
taking simply the correlation matrix of the pooled data does not work. Thus, we introduce the notion of
a $G$-valid weight matrix that suitably captures the tree structure.  We will 
construct $G$-valid weight matrices by aggregating
correlation matrices. 
The aggregation 
does not necessarily produce a correlation matrix, yet we show that this approach yields a consistent procedure to learn the skeleton in 
low- and high-dimensional settings. The approach is detailed in Section~\ref{sec.skeleton}. Its consistency is discussed in  Appendix \ref{app.consistency.skeleton}.

To determine the orientation of the edges we propose and compare several methods described in
Sections~\ref{sec.singleorientation} and \ref{sec.collidersearch}; their consistency is discussed in the Appendix \ref{app.sec.cons}. 
Importantly, all procedures use only correlations and low-dimensional marginal distributions that are efficiently estimable in a low sample regime, see e.g., the concentration inequality given in \cite[Cor.~1]{kalish:2007} in which it is explicit that the effective sample size decreases as the size of conditioning set increases requiring more samples to achieve the same accuracy in the estimation.

\subsection{Learning a Skeleton}\noindent
\label{sec.skeleton}
In seminal work, \citet{rebane:pearl:1987} proved that when a distribution satisfies conditional independence constraints induced by a polytree, then the skeleton of the tree can be recovered using the algorithm for the maximum weight spanning tree of \citet{kruskal:1956}, with weight matrix given by the mutual information between the variables. Notably, the only property of mutual information  needed in this algorithm is the data processing inequality \citep[Thm.~2.8.1]{cover:2006}. This implies that Kruskal's algorithm finds the correct polytree skeleton any time the weight matrix that is used respects the following condition.
\begin{definition}
Given a polytree $G$, a weight matrix $W\in\mathbb{R}^{p\times p}$ is $G$-valid if for every triplet $u-v-w$ in $G$ 
\begin{equation}
\label{g.valid.ineq}
    \min\{W(u,v),W(v,w)\} \ge W(u,w).
\end{equation}
If all inequalities in \eqref{g.valid.ineq} are strict, then $W$ is strictly $G$-valid.
\end{definition}

\begin{lemma}{\cite[Thm.~1]{rebane:pearl:1987}}
If $W$ is \emph{strictly} $G$-valid, then the maximum weight spanning tree of the complete graph over $V$ with weight matrix $W$ is the skeleton of $G$.
\end{lemma}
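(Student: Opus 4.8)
The plan is to combine two ingredients: the exchange (cut/cycle) characterization of maximum weight spanning trees, and a global strengthening of inequality~\eqref{g.valid.ineq} obtained by iterating it along paths of the skeleton. Throughout, write $T$ for the skeleton of $G$; since $G$ is a polytree, $T$ is a spanning tree of the complete graph on $V$, and the MWST is sought on that complete graph with all pairs as candidate edges.

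First I would isolate the criterion I want to verify: if, for every pair $\{u,w\}$ that is \emph{not} an edge of $T$, the weight $W(u,w)$ is strictly smaller than the weight of every edge on the unique $T$-path joining $u$ and $w$, then $T$ is the unique maximum weight spanning tree. This is the sufficiency direction of the standard cut/cycle property, and I would justify it by a short exchange argument: given any spanning tree $T'\neq T$, pick an edge $\{u,w\}\in T'\setminus T$; deleting it splits $T'$ into two sides, the $T$-path from $u$ to $w$ must cross this cut in some edge $f\neq\{u,w\}$, and since $W(f)>W(u,w)$ the tree $T'-\{u,w\}+f$ has strictly larger weight, so $T'$ is not maximal. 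As the maximum over finitely many spanning trees is attained, it must be $T$.

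The core of the argument is the global inequality, a combinatorial analogue of the data processing inequality: for distinct $u,w$ joined in $T$ by a path $u=v_0-v_1-\cdots-v_k=w$ with $k\ge 2$, one has $W(u,w)<\min_{1\le i\le k}W(v_{i-1},v_i)$. I would prove this by induction on $k$. The base case $k=2$ is precisely~\eqref{g.valid.ineq} for the triple $v_0-v_1-v_2$, which is strict because $W$ is strictly $G$-valid. For $k\ge 3$, applying~\eqref{g.valid.ineq} to the triple $v_0-v_{k-1}-v_k$ gives $W(v_0,v_k)<\min\{W(v_0,v_{k-1}),W(v_{k-1},v_k)\}$; the induction hypothesis on the shorter path $v_0-\cdots-v_{k-1}$ shows that $W(v_0,v_{k-1})$ lies below every edge weight of that sub-path, while the term $W(v_{k-1},v_k)$ accounts for the last edge. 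Chaining these strict inequalities yields the bound for the full path.

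Combining the two steps finishes the proof: every non-edge $\{u,w\}$ of $T$ is joined in $T$ by a path of length $\ge 2$, so the global inequality forces $W(u,w)$ strictly below each edge weight along that path, which is exactly the hypothesis of the exchange criterion; hence $T$ is the unique MWST. The step I expect to be the main obstacle is the global inequality, and specifically the right to invoke~\eqref{g.valid.ineq} for the triple $v_0-v_{k-1}-v_k$, whose outer vertices need not be adjacent. This is the form in which the condition encodes the data processing inequality, valid whenever the middle vertex separates the outer two in $T$, and it is this separating-triple reading, rather than the merely adjacent one, that the induction relies on: the purely adjacent reading never constrains $W(v_0,v_k)$ for $k\ge 3$ and would not suffice.
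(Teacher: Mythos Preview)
The paper does not supply its own proof here; the lemma is simply attributed to Rebane and Pearl. Your exchange argument together with the path induction is the standard route and is correct \emph{once} inequality~\eqref{g.valid.ineq} is available for separating triples $u-v-w$ (i.e., whenever $v$ lies on the $T$-path from $u$ to $w$), which is exactly the point you flag.

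That worry is not a technicality but a genuine gap in the statement as written. If ``triplet $u-v-w$ in $G$'' is read the way the paper uses the phrase elsewhere---two adjacent skeleton edges sharing the middle vertex---then the definition places no constraint whatsoever on $W(u,w)$ when $u$ and $w$ are at tree-distance $\ge 3$, and the lemma is simply false. A concrete witness: take the path $a-b-c-d$, set $W(a,b)=W(b,c)=W(c,d)=1$, $W(a,c)=W(b,d)=\tfrac12$, and $W(a,d)=100$; both adjacent-triple inequalities hold strictly, yet any spanning tree containing $\{a,d\}$ beats the skeleton. So no induction that only peels off adjacent triples can rescue the argument, and you are right that the adjacent reading ``would not suffice''.

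The resolution is that the intended hypothesis is the separating-triple version, which is precisely the data processing inequality the paper invokes when motivating the definition, and which the paper's own examples do satisfy: by the trek rule on a polytree, $|\rho_{u,w}|=|\rho_{u,v}|\,|\rho_{v,w}|$ whenever $v$ lies on the $u$--$w$ path, not only when the three nodes are consecutive. Under that reading your induction step at $v_0-v_{k-1}-v_k$ is legitimate, the global bound $W(v_0,v_k)<\min_i W(v_{i-1},v_i)$ follows, and your cut/exchange argument then yields that $T$ is the unique maximum weight spanning tree.
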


In a polytree $G$ there is at most one trek between any two vertices. In particular for any triple $u-v-w$, the only possible trek between $u$ and $w$ is the one involving also $v$, so from Proposition~\ref{prop.correlations} we have that $|\rho_{u,w}|=0$ if $u-v-w$ is a collider triple and $|\rho_{u,w}|=|\rho_{u,v}||\rho_{v,w}|$ otherwise. In particular, the absolute (observational) correlations define a $G$-valid weight matrix that is strictly $G$-valid if $G$ is causally minimal.  Similarly, for each $I\in\I$, the associated absolute correlation matrix $|R^I|$ is  a $G$-valid weight matrix.

To efficiently learn from available interventional data, we wish to form
a single weight matrix that encodes the information of all the experimental settings.
To this end, we use an aggregation function $A:(\mathbb{R}^{p\times p})^{|\I|}\to \mathbb{R}^{p\times p}$ that takes a collection of $G$-valid weight matrices and outputs a strictly $G$-valid matrix. One way of obtaining such a function $A$ is to apply the same order-preserving transformation $a:\mathbb{R}^{|\I|}\to \mathbb{R}$ to each vector of correlations $(\rho_{ij}^1,\dots,\rho_{ij}^{|\I|})$.  By order-preserving, we mean that $a(x_1,\dots,x_{|\I|})\leq a(y_1,\dots,y_{|\I|})$ anytime $x_i\leq y_i$ for $i=1,\dots,|\I|$ . Possible choices for $a$ are:
\begin{enumerate}
\label{eq.weights}
    \item $a(\rho_{ij}^1,\dots,\rho_{ij}^{|\I|}))=
    -\sum_{I\in\I}\frac{n_I}{2}\log(1-(\rho_{ij}^I)^2)$,\footnote{This is well defined 
    for absolute correlations in $[0,1)$.}
    \item Weighted mean,
    \item Weighted median,
\end{enumerate}
where $n_I$ is the size of the dataset $I$ and the weights we refer to for the mean and the median are $\frac{n_I}{\sum n_I}$. 

The output of Kruskal's algorithm is the same for  any strictly $G$-valid matrix $W$.
In practice, however, we work with an estimate $\Tilde{W}$ and different aggregation functions can give different results. We discuss the numerical performance of the three aggregation functions in Section~\ref{subsec.sim.skeleton}.

\subsection{Identifiability of Edge Orientations}\noindent
\label{sec.orientation}
The output of our learning methods is an $\I$-CPDAG (Section~\ref{subsec.ICPDAG}) whose construction simplifies for polytrees. The next definition
singles out the edge directions that  can be identified in a polytree. Notice that in a polytree there is only one path between two vertices, thus all the colliders are unshielded.  
\begin{definition}
\label{def.idenitf}
An edge $u\to v\in G$ is $\I$-directly-identifiable if it is either  part of a collider, or there exists $I\in\I$ such that $|I\cap\{u,v\}|=1$. It is $\I$-identifiable if it is either 
$\I$-directly-identifiable, or there is an edge $w\to u\in E$ that is $\I$-identifiable.
\end{definition}

\begin{proposition}
The $\I$-CPDAG of $G$ is the partially directed graph 
that has the same skeleton of $G$, and whose directed edges are the edges of $G$ that are $\I$-identifiable.
\end{proposition}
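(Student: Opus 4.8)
The plan is to prove the two inclusions separately, in each case reducing the statement about the $\I$-CPDAG to the characterization of the $\I$-MEC of \citet{YKU18} in terms of skeleta and v-structures of the augmented graphs $G^{\tilde{\I}_I}$. A useful preliminary observation is that $\emptyset\in\I$ and $\tilde{\I}_\emptyset=\I$, so any two members $G_1,G_2$ of the same $\I$-MEC must in particular have the same v-structures in the full $\I$-DAGs $G_1^{\I}$ and $G_2^{\I}$; since all colliders in a polytree are unshielded, this is essentially the only instance of the theorem I expect to need. Throughout I use that the skeleton is a tree, so reversing orientations never creates a directed cycle and any reorientation of $G$ is again a polytree, and that between any two vertices there is a unique path.

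\emph{Soundness.} First I would show that every $\I$-identifiable edge is oriented identically across the $\I$-MEC, by induction on the recursive definition. For the base case, if $u\to v$ lies on a collider $u\to v\leftarrow w$ then this is an unshielded collider of $G^{\I}$, hence a v-structure shared by every $G_2^{\I}$, forcing $u\to v$ in $G_2$. If instead some $I\in\I$ has $|I\cap\{u,v\}|=1$, the augmenting node $\zeta_I$ is adjacent to exactly one of $u,v$: when $v\in I,\,u\notin I$ the v-structure $\zeta_I\to v\leftarrow u$ is present in $G^{\I}$ and therefore forces $u\to v$ in $G_2$, while when $u\in I,\,v\notin I$ the \emph{absence} of the v-structure $\zeta_I\to u\leftarrow v$ rules out $v\to u$; in both cases the edges at $\zeta_I$ are fixed by $\I$, so the orientation of the $u$--$v$ edge is pinned down. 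For the inductive step, an already-forced parent edge $w\to u$ together with $u\to v$ and the non-adjacency of $w,v$ (automatic from the tree structure) means $G$ has no v-structure $w\to u\leftarrow v$; matching v-structures then forbid $v\to u$ in $G_2$. This is exactly the single orientation-propagation rule that can fire on a tree skeleton.

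\emph{Completeness.} The converse is the substantive part: given an edge $u\to v$ that is not $\I$-identifiable, I must exhibit a member of the $\I$-MEC in which it is reversed. I would extend $u\to v$ backwards to a maximal directed path $p_0\to p_1\to\dots\to p_k=v$ (with $p_{k-1}=u$) all of whose edges are non-identifiable. Unwinding the failure of identifiability supplies the structural facts I need: each $p_i$ with $i\ge 1$ has $p_{i-1}$ as its \emph{unique} parent (otherwise a second parent would create a collider and make an incoming edge identifiable), $p_0$ is a root, every off-path neighbour of a $p_i$ is a child, and---crucially---since no path edge satisfies $|I\cap\{p_i,p_{i+1}\}|=1$, each $I\in\I$ either contains all of $p_0,\dots,p_k$ or none of them. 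I then define $G_2$ by reversing the entire path; because the skeleton is a tree, $G_2$ is again a polytree with the same skeleton as $G$.

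The main obstacle, and where the hypothesis $|I\cap\{u,v\}|\neq 1$ is indispensable, is checking that $G_2$ lies in the $\I$-MEC, i.e.\ that $G^{\tilde{\I}_I}$ and $G_2^{\tilde{\I}_I}$ share all v-structures for every $I$. Reversal changes only the path edges, so all adjacencies and all off-path edges are untouched, and one verifies that no v-structure is gained or lost: in both $G$ and $G_2$ every path node has a single in-path parent ($p_{i-1}$ in $G$, $p_{i+1}$ in $G_2$) and no off-path parent, so no collider is centred on the path; v-structures centred at off-path nodes involve only unchanged edges; and for the augmenting nodes, a would-be v-structure $\zeta_J\to p_i\leftarrow p_{i\pm1}$ with $J\in\tilde{\I}_I$ would require $p_i\in J$ but $p_{i\pm1}\notin J$, which the equivalence of all the $p_i$ under every target (hence under every union $J=I\cup J'$ appearing in $\tilde{\I}_I$) forbids. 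Since all v-structures agree, $G$ and $G_2$ are $\I$-Markov equivalent, so the $u$--$v$ edge appears undirected in the $\I$-CPDAG. Combined with soundness, this shows the directed edges of the $\I$-CPDAG are precisely the $\I$-identifiable edges.
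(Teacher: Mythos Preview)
The paper states this proposition without proof; the text immediately following Definition~\ref{def.idenitf} moves straight to how the characterization is exploited algorithmically. So there is nothing in the paper to compare against, and your proposal supplies what the paper omits.

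Your argument is correct. The soundness direction is standard: v-structures of $G^{\I}$ fix the collider edges and the edges singled out by the $\zeta_I$-nodes, and the recursive clause of Definition~\ref{def.idenitf} is exactly Meek's first orientation rule, which is the only propagation rule that can fire on a tree skeleton. The completeness direction is where the polytree hypothesis does the work, and your construction is the right one. From the failure of $\I$-identifiability along the backward chain you correctly extract that each $p_i$ with $i\ge 1$ has $p_{i-1}$ as its sole parent, that $p_0$ is a root, that every off-path neighbour of a path node is a child, and that each $I\in\I$---hence each $J\in\tilde{\I}_I$, being a union of targets---contains either all of $p_0,\dots,p_k$ or none. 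These facts are precisely what is needed to verify that reversing the whole chain neither creates nor destroys any v-structure in the augmented graphs: at path nodes the unique in-neighbour (in either orientation) rules out ordinary colliders, and the ``all-or-none'' property rules out colliders of the form $\zeta_J\to p_i\leftarrow p_{i\pm1}$. Hence $G$ and the reversed polytree $G_2$ are $\I$-Markov equivalent by the characterization of \citet{YKU18}, which establishes completeness.
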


It is clear from Definition~\ref{def.idenitf} that once we have computed the skeleton of the polytree, the $\I$-CPDAG can be identified by searching the $\I$-directly-identifiable edges first. Then, once we have a triple of the form $u\to v-w$ we can orient $v-w$ testing if the triple forms a collider. 
In the next sections we describe different orientiation strategies: Section~\ref{sec.singleorientation} focuses on single edge orientations, and Section~\ref{sec.collidersearch} focuses on finding colliders. Searching for one type of $\I$-directly-identifiable edge or the other first does not matter on a population level. However, it can make a difference when working with data due to approximation errors inherent to each
possible approach.
A comparison of performance between these approaches is given in Appendix \ref{app.sec.orientation}.

Hereafter, $X_u^{I},X_{v}^{I}$ denote vectors of observed values of the variables $X_{u},X_{v}$ in the interventional setting
$I\in \I$. Entrywise, $X^I_{u}=(X^I_{u,1},\dots,X^I_{u,n_I})$ and $X^I_{v}=(X^I_{v,1},\dots,X^I_{v,n_I})$. The total sample size is then $n:=\sum_{I\in \I}n_I$.
Fix $\I$ and  an edge $\{u,v\}$,  we define 
\begin{align*}
\I_{\dot{v}}&:=\{I\in \I: v\notin I\},& \!\! \I_{\dot{u},v}&:=\{I\in \I: v\in I,u \notin I\},\\
\I_{\dot{u},\dot{v}}&:=\{I\in \I: u,v\notin I\},&
 \!\!\I_{u,\dot{v}}&:=\{I\in \I: u\in I,v\notin I\}.
\end{align*}

\subsection{Learning Single Edge Orientations}\label{sec.singleorientation} \noindent
\subsubsection{Invariance of Regression Coefficients (IRC)}
\label{subsec.common.reg.coeff}
To orient the edge $\{u,v\}$, we assume there exists $I_{0}\in \I$ where no intervention took place, i.e., $I_{0}=\emptyset$. 
Fix $I\in \I_{u,\dot{v}}$. If the true model is $u\to v$, then an intervention on $u$ does not change the regression coefficient of 
$X_{v}^{I}=\lambda_{uv}^{I}X_{u}^{I}+\epsilon^{I}$, 
$\epsilon^{I}\sim\mathcal{N}(0,\sigma_{v|u})$. 
Namely, $\lambda_{uv}^{I}=\lambda_{uv}^{\emptyset}$.
Thus to orient the edge $\{u,v\}$ we test the hypothesis $\mathcal{H}:\lambda_{uv}^{I}=\lambda_{uv}^{\emptyset}$. There are different options for the choice of this test and IRC testing has been used before in causal structure learning  \citep[e.g.,][]{deml:2018,ghassami:2017}.
Here we use an F-test \cite{toyoda:1974}, which is a modified version of the test statistic in \citep{chow:1960} for the case of unequal variances.
We briefly present it here.

Let $e_{I_0}$ and $e_{I}$ denote the vectors of residuals of the regressions $X_{v}^{I}=\lambda_{uv}^{I_0}X_{u}^{I_0}+\epsilon^{I_0}$ and $X_{v}^{I}=\lambda_{uv}^{I}X_{u}^{I}+\epsilon^{I}$ respectively, and let $e$ denote the vector of residuals, which is obtained by pooling the data sets 
$(X_u^{I_0},X_v^{I_0}), (X_{u}^{I},X_v^{I})$. Under the null hypothesis $\mathcal{H}_{I,u\to v}$ the statistic is
\[
\frac{(e^{T}e-e_{I_0}^{T}e_{I_0}-e_{I}^{T}e_{I})/k}{(e_{I_0}^{T}e_{I_0}-e_{I}^{T}e_{I})/f} \sim F(k,f)
\] 
where $k=2$ and $f=\frac{((n_{I_0}-k)\hat{\sigma}_{I_{0}}^{2}+(n_{I}-k)\hat{\sigma}_{I_{0}})^2}{(n_{I_0}-k)\hat{\sigma}_{n_0}^4+(n_{I}-k)\hat{\sigma}_{n_{I}}^4}$.

For each $I\in \I_{u,\dot{v}}$ we
test the hypothesis $\mathcal{H}_{I,u\to v}$ and reject the orientation
$u\to v$ in favor of $v\to u$ if the p-value of any of these tests is below 
$\alpha/|\I_{u,\dot{v}}|$ for some chosen significance level $\alpha$.  This test is possible whenever $\I_{u,\dot{v}}\not=\emptyset$.

In an analogous fashion, changing the roles of $u,v$, if  $\I_{\dot{u},v}$ is nonempty, we may perform a test for the orientation $v\to u$. 
If only one of the tests $u\to v $ or $v\to u$ is possible, then the decision rule to orient the edge $\{u,v\}$ is clear.
In case both tests are possible, we choose to reject the test with the smallest p-value, as long as such p-value  is below the corresponding significance ($\alpha/|\I_{u,\dot{v}}|$ or $\alpha/|\I_{\dot{u},v}|)$.

\subsubsection{Single Edge BIC Score Before Collider Search}
\label{subsec.likelihood.comp}

Let $\mathcal{H}_1$ and $\mathcal{H}_2$ be any two parametric models  with parameter spaces $\mathcal{M}_1\subset\R^{r_1}$ and $\mathcal{M}_2\subset~\R^{r_2}$ and of dimensions $d_1$ and $d_2$, respectively. 
A model selection based on the BIC consists of choosing the minimizer of the following penalized log-likelihood
\begin{equation}
\label{eq:pen:log}
    l(\mathcal{H}_i\mid X_n)=\argmin_{\theta_i\in\mathcal{M}_i}\big(-\log(p_{\mathcal{H}_i}(X_n\mid \theta_i))+\frac{d_i}{2}\log(n)\big),
\end{equation}
for $i\in\{1,2\}$, where $X_n$ is the observed dataset of size $n$. For more details on model selection and information criteria, we refer the reader to \cite[Chap.~2]{anderson:2004}.

In our setting we want to select between the model in which $u\to v\in G$ and the one in which $v\to u\in G$, respectively denoted as $\mathcal{H}_{u\to v}$ and $\mathcal{H}_{v\to u}$. The polytree assumption allows us to compute the log likelihood using only information on the marginal distribution of $u$ and $v$. We show in detail only the derivation of the penalized log likehood for $\mathcal{H}_{u\to v}$, since the one for the other model is analogous.

Under the model $\mathcal{H}_{u\to v}$ we can write in each dataset $X_v=\lambda_{u,v}^I X_u+\epsilon_{v|u}^I$ where, using Eq.~\eqref{eq:struct}, we can see that 
\begin{align}
    &\epsilon_{v|u}^I=\sum_{w\in \pa(v)\setminus\{u\}}\lambda_{wv}^{(I)}X_{w}^{(I)}+\varepsilon_{v}^{(I)}\sim\mathcal{N}(0,\sigma^I_{v|u}), \\
    &\sigma^I_{v|u}=\sum_{w\in \pa(v)\setminus\{u\}}(\lambda_{wv}^2)^{(I)}\sigma_w^{(I)}+\sigma_{v}^{(I)},    
\end{align}
a consequence of the polytree is assumption is that $X_u$ is independent from all the other parents of $v$, and so it is independent from $\epsilon_{v|u}$ as well. This allows us to rewrite the log-likelihood of a observed pair $(X^I_{u,i},X^I_{v,i})$ as
\begin{align*}
    -\log{2\pi}-\frac{1}{2}\log{{\sigma^I_u}^2}-\frac{1}{2}\log{{\sigma^I_{v|u}}^2}-\frac{(X_{u,i}^I)^2}{2{\sigma^I_u}^2}-\frac{(X^I_{v,i}-\lambda^I_{u,v}X^I_{u,i})^2}{2{\sigma^I_{v|u}}^2}.
\end{align*} 
The log-likelihood for the whole dataset is then
\begin{equation}
    \begin{aligned}
        \label{eq.loglik}
    -n\log{2\pi}+\displaystyle\sum_{I\in\I}\bigg(-\frac{n_I}{2}\log{{\sigma^I_u}^2}-\frac{n_I}{2}\log{{\sigma^I_{v|u}}^2}
    +\displaystyle\sum_{i\in[1,\dots,n_I]}-\frac{(X_{u,i}^I)^2}{2{\sigma^I_u}^2}-\frac{(X^I_{v,i}-\lambda^I_{u,v}X^I_{u,i})^2}{2{\sigma^I_{v|u}}^2}\bigg).        
    \end{aligned}    
\end{equation}
Straightforward computations show that the MLEs for the variance parameters are the usual  variances computed on each dataset individually leading to 
\begin{align}
    &{{\hat{{\sigma}}_u}^{I^2}}=\frac{1}{n_I}||X^I_{u}||^2, & &{{\hat{{\sigma}}_{v|u}}^{I^2}}=\frac{1}{n_I}||X^I_{v}-\lambda_{u,v}^IX^I_{u}||^2.\label{eq.var.mle} 
\end{align}

   For the regression coefficients, since we know that they vary only across the datasets in which $v$ has been intervened upon, it is not possible to find a closed form expression for the estimator. To handle this, we tell apart the datasets in which $v$ has been intervened on from those in which it has not. Define $\lambda_{u,v}$ as the regression coefficient that is shared across the environments in which $v$ has not been intervened on. 
 Substituting \eqref{eq.var.mle} into \eqref{eq.loglik}, the part of the log-likelihood that depends on the regression coefficients becomes
 \begin{align*}
     -\big(\sum_{I\in\I_{\dot{v}}}\!\!\frac{n_I}{2}\log{||X^I_v\minus \lambda X^I_u||^2}+\!\!\sum_{I\in \I_v}\frac{n_I}{2}\log{||X^I_v\minus\lambda^I X^I_u||^2}\big).
 \end{align*}
  So if $I\in\I_v$ then $\hat{\lambda}^I=(X_u^I\cdot X_v^I)/||X_u^I||^2$, while $\hat{\lambda}$ is given by the solution of the 1-dimensional bounded optimization problem
  \begin{equation}
      \argmax_{\lambda\in[-1,1]} -\sum_{I\in\I_{\dot{v}}}\!\!\frac{n_I}{2}\log{||X^I_v\minus \lambda X^I_u||^2}.
  \end{equation}
For solving this problem, we find Brent's method, see e.g. \cite[Chap.~10.3]{press:2007}, to work well in practice. Finally, note that the dimension of the model, that is, the number of free parameters we need to optimize over, is  $2|\I|+1+|\I_{v}|$. Indeed, we have two variance coefficients for each interventional dataset, one regression coefficient for the datasets in which $v$ has not been intervened upon, and one regression coefficient for each dataset in which $v$ has been intervened on.
\begin{remark}
\label{rem.mar.variance}
\rm
In this section we allow the marginal variances $\sigma_u$ and $\sigma_{v|u}$ to vary also in the datasets in which $u$ and $v$ are not intervened upon. This is because the variance can change also as a consequence of an intervention on an ancestor. A precise likelihood computation would require to check for each experiment if there are possible ancestors in the graph that can affect the marginal variances, but this would be too computationally expensive while giving little benefit in terms of likelihood comparison.  We discuss this aspect further in the appendix, Prop.~A.4, where we also clarify that consistency is not affected by the relaxation of the variance constraints.\end{remark}

\subsubsection{Single Edge BIC Score After Collider Search}
\label{subsec.aftercoll.lik}
Let $G^{CP}=(V,E^{CP})$ be the CPDAG associated to $G$ and $G^U$ be graph obtained from $G^{CP}$ after removing all the oriented edges. We can then work on each connected component of $G^U$ independently. Let $U$ be one of these components, and $u-v$ be one of its $\I$-directly-identifiable edges. If we orient the edge from $u$ to $v$ then we would orient the remaining edges in $U$ in such a way that no other colliders appear.  We now propose a simple likelihood computation that takes this into account.

The choice of an orientation for the edges in $U$ reduces to the choice of a root vertex in it. If $u$ is a vertex in $U$, then we let $\mathcal{H}^U_u$ denote the model in which $u$ is the root. The likelihood function for this model factorizes in a simple way since each vertex has at most one parent.  We can write the log-likelihood of a datapoint as
\[
\log{f(X_U^I)}=\log{f(X_u^I)}+\sum_{v\neq u} \log{f(X_v^I|X^I_{pa(v)})},
\]
where each of the summands can be maximized independently from the others. This time the maximization is easier than the one in Section~\ref{subsec.likelihood.comp} because the variance parameters are shared across all environments in which a node has not been intervened upon. A closed form solution for this problem is provided in \cite{hauser:2015} and is given by $\hat{\sigma}_{v|\pa(v)}^2=\sum_{I_{\dot{v}}}\frac{n_I}{n_{\not v}}\hat{\sigma}_{v|\pa(v)}^{I^2}$, where $n_{\dot{v}}=\sum_{I_{\dot{v}}}n_I$ and $\hat{\sigma}_{v|\pa(v)}^{I^2}$ is the MLE of the conditional variance in the dataset $I$, while $\hat{\lambda}=\frac{\hat{\sigma}_{v,\pa(v)}}{\hat{\sigma}^2_{\pa(v)}}$.
The dimension of the model $\mathcal{H}_u$ is $1+|\I_u|+2\sum_{v\neq u}(1+|\I_{v}|)$.
 
 \subsection{Collider Search}
\label{sec.collidersearch}

\subsubsection{BIC Score for Collider Search}
\label{subsec.vstruc.before}
If the skeleton has a triplet $u-v-w$, we can decide whether it forms a collider or not by testing for the independence of $u$ and $w$ in all the environments. Log-likelihood ratio statistics for a test are provided by function 1 defined in Section~\eqref{eq.weights}, applied to $x_I=\rho^I_{u,w}$.
Here the difference in the dimension between the model $\mathcal{H}_{u\to v\leftarrow w}$ and the other 3 models in which $u$ and $w$ are not independent is $|\I|$.  

\subsubsection{BIC Score for Collider Completion}
\label{subsec.vstruc.after}
Even though testing for the independence of $u$ and $w$ as in Section~\ref{subsec.vstruc.before} would give a consistent procedure also in the case in which we have already oriented $u\to v$, in this case a more refined analysis of the likelihood is possible. Indeed, here we can test the model $u\to v\to w$ against $u\to v\leftarrow w$. In the former case the likelihood factorizes as $f(X_u^I)f(X_v^I|X_u^I)f(X_w^I|X_v^I)$ and  can be maximized as in Section~\ref{subsec.likelihood.comp}. In the latter case it factorizes as $f(X_u^I)f(X_w^I)f(X_v^I|X_u^I,X_w^I)$,  and the MLE for $\lambda_v^I=(\lambda^I_{v|u},\lambda^I_{v|w})$ in the environments in which $v$ has been intervened upon is given by the usual estimator $\hat{\Sigma}_{u,w}^{I^{-1}}\hat{\Sigma}_{v|u,w}^I$, while the common regression coefficients for the environments in which $v$ hasn't been intervened on  $\hat{\lambda}_{v\mid u},\hat{\lambda}_{v\mid u}$, is given by the solution of the following 2 dimensional optimization problem:
\begin{align*}
   \argmax_{\lambda_{v\mid u},\lambda_{v\mid u}\in[0,1]^2}\sum_{I\in\I_{\dot{v}}}\!-\!\frac{n_I}{2}\log{||X^I_v\minus \lambda_{v\mid u} X^I_u\minus \lambda_{v\mid w} X^I_w||^2}
\end{align*}
The dimension of the model $\mathcal{H}_{u\to v\to w}$ is $3|\I|+|\I_v|+|\I_w|+2$ while that of the model $\mathcal{H}_{u\to v\to w}$ is $2(2|\I|+|\I_v|+1)$.

\subsection{Complete Orientation Procedures}\label{subsec.orient.proc}
\noindent
We propose two  procedures for the orientation. The pseudocode for each, including subroutines, is in
Appendix \ref{app.sec.pseudocode}: 
\begin{enumerate}
    \item[P.1] Compute the CPDAG using the collider search, then use single edge orientation.
    \item[P.2] Use single edge orientation first, then orient the rest of the $\I$-CPDAG using the collider search.
\end{enumerate}
These procedures differ in the amount of statistical vs.~causal information they extrapolate from data, with P.1 being the more statistically oriented, while P.2 the more causally oriented. 
Note that for the single edge orientation we can choose between BIC (Sec.~\ref{subsec.likelihood.comp}) and IRC (Sec.~\ref{subsec.common.reg.coeff}), and  to find colliders we can use the more general setting of Section~\ref{subsec.vstruc.before} hereon referred to as ``simple'', or the more refined analysis of Section~\ref{subsec.vstruc.after} hereon referred to as ``refined''. 

Whenever we use the orientation procedures involving a BIC score, it is intended that we solve a local model selection problem in the following way: we compute the MLE estimator for the possible models (e.g., $\mathcal{H}_{u \to v}$ and $\mathcal{H}_{v \to u}$ for the single edge orientation) and then plug it into the likelihood function to compute the maximum likelihood $\hat{L}$, the BIC score of the model $\mathcal{H}$ will be $\log(n)\dim(\mathcal{H})-2\log(\hat{L})$, where $n$ is the sample size. Finally, we select the model with the highest BIC score.

\section{Simulation Studies}
\label{sec.sim.study}
In this section we first assess the performance of the
different skeleton and orientation recovery procedures, then 
we construct full versions
of our algorithms to compare to other methods.
The setup for our simulations is explained in Appendix \ref{app.descr.comp}, the code is available at \cite{github}. 

\subsection{Skeleton and Orientation Recovery}
\noindent
 Fig.~\ref{fig.skeleton_trends} shows the structural Hamming distance (SHD) in skeleton recovery for the three aggregation functions introduced in Section~\ref{sec.skeleton}.  The SHD is the minimum number of edge additions, deletions and reversals necessary to transform one graph into another. We see that the SHD decreases for increasing number of samples, with little effect of the number of datasets and the size of the intervention. Fig.~\ref{fig.skel_runtime} in Appendix \ref{app.sec.orientation}. shows that the weighted mean is the  fastest. 
  A more detailed analysis is contained in Appendix \ref{app.subsec.skel.rec}.  

  \begin{figure}[ht]
     \centering
     \includegraphics[scale=0.4]{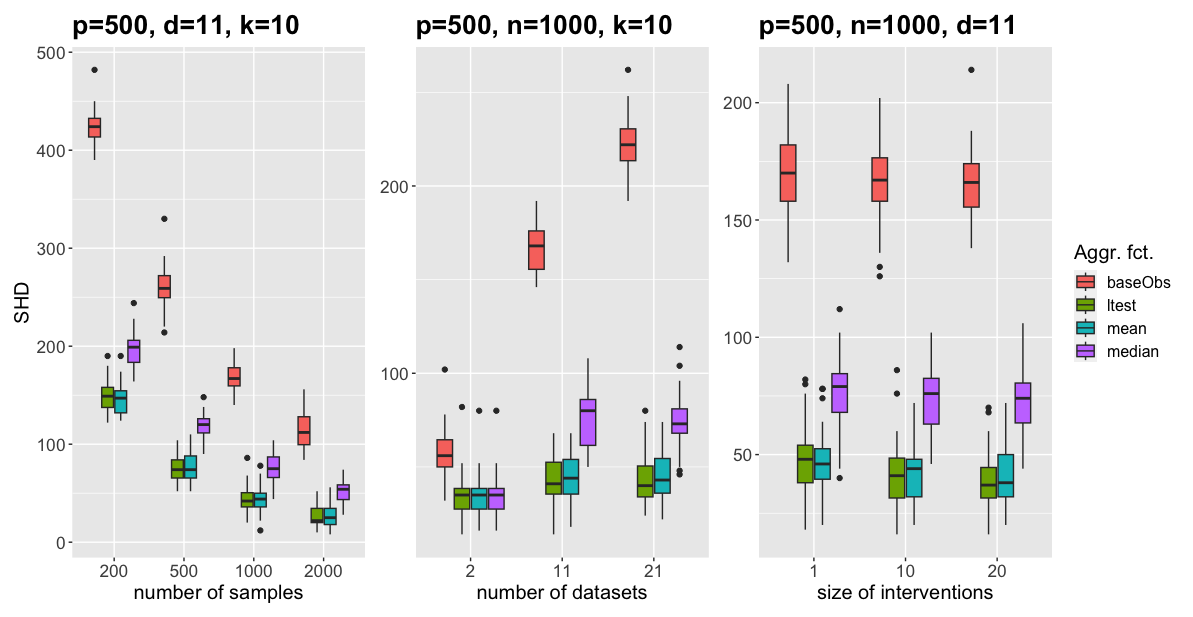}
     \caption{Skeleton recovery  with $p=500$ nodes,   $N=200,500,1000,2000$ samples, $d=2,11,21$ datasets (1 interventional and $d-1$ observational) and $k=1,10,20$ nodes targeted per intervention, respectively. 
    The aggregation function baseObs denotes a baseline with observational data only. The labels Itest, mean, median indicate each of the procedures in the list in Section~\ref{sec.skeleton} respectively.}
     \label{fig.skeleton_trends}
 \end{figure}
 Fig.~\ref{fig.orientation_trends} depicts the SHD in orientation recovery, conditional on the skeleton being correct.
 Interestingly, the more refined test proposed in Section~\ref{subsec.vstruc.after} gives no benefits compared to the ones in Section~\ref{subsec.vstruc.before}, and it is  slower to compute. P.1 also performs better than P.2 in general. 
 More details on the orientation procedure  in Appendix \ref{app.sec.orientation}.
  \begin{remark} \label{rem:fixed:d}
  \rm
     Notice that in the experiment shown in the middle panels of Figs.~\ref{fig.skeleton_trends} and \ref{fig.orientation_trends}, the overall sample size is fixed, so increasing $d$ significantly reduces the sample size in each one of the datasets, resulting in a much more difficult learning problem. This explains why the orientation procedures seem to be negatively affected by the introduction of interventional datasets.
 \end{remark}
   \begin{figure}[ht]
     \centering
     \includegraphics[scale=.4]{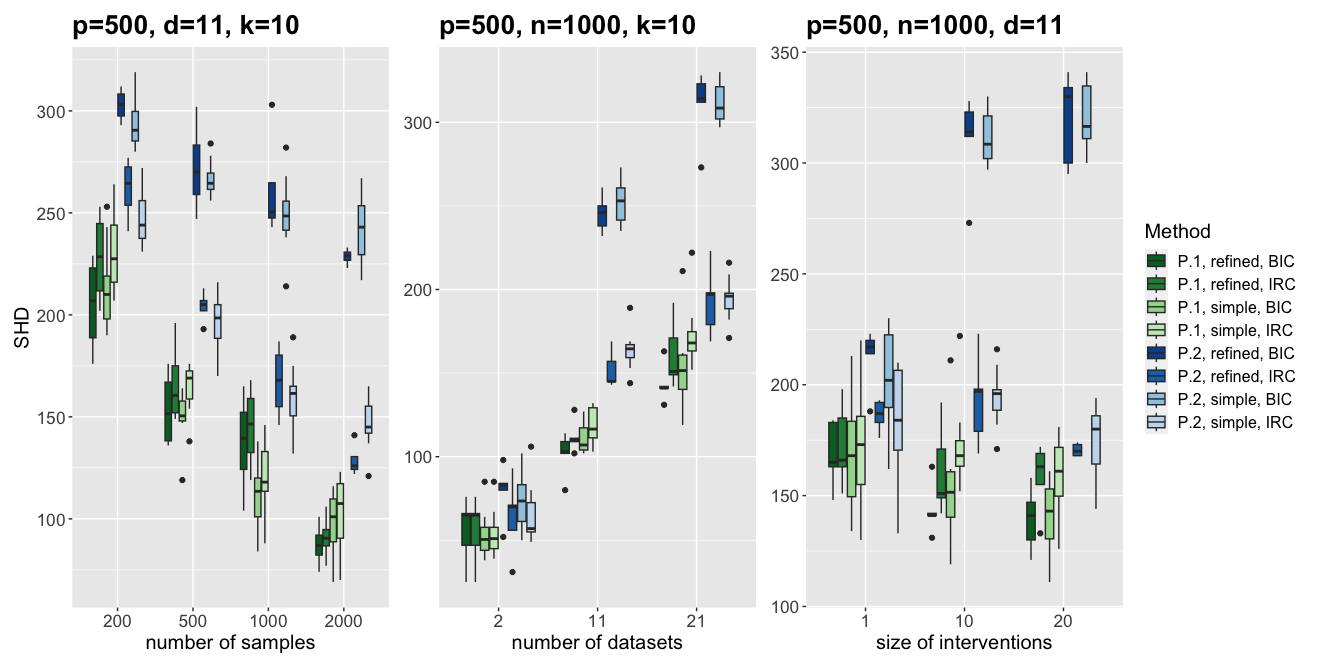}
     \caption{Orientation recovery with base parameters 500 nodes, 1000 samples and 10  intervention targets with 10 intervened nodes each. For all proposed methods we see the convergence when the sample size increases. In this case increasing the number of datasets negatively affects the performance, this point is disccused in Rem.~\ref{rem:fixed:d}.}
     \label{fig.orientation_trends}
 \end{figure}
 
\label{subsec.sim.skeleton}
\subsection{Complete Algorithm}
\noindent
Although our focus is on causal discovery in  high-dimensional settings, for completeness,  we compare
the performance of our full skeleton and orientation recovery 
procedures (Sec.~\ref{subsec.orient.proc}) to the performance of GIES \cite{Hauser:20212}, DCDI \cite{brouillard:2020}, BaCaDI \cite{hagele:2022} and IGSP \cite{wang:2017} in low dimension with simulations on DAGs with $p=20$ nodes. The results are summarized in Fig.~\ref{fig.low.dim} (left). We see that general algorithms achieve a better accuracy, but this comes with a high price in terms of running time that makes them infeasible for larger graphs. We refer to Table \ref{app.tab.runtime} 
for a runtime analysis.

For high-dimensional settings, to leverage accuracy and fast computation,
we construct our full algorithm using either P.1 or P.2 with mean as 
aggregation function, simple for collider search and IRC for single edge
orientations. Among the four algorithms from the literature considered in
Fig.~\ref{fig.low.dim} (left), the only one that would terminate in less than 24h for DAGs with $p=500$ nodes is GIES \cite{hauser:2015}. Thus, we
compare our two high-dimensional versions of the algorithm with GIES only.
The Fig.~\ref{fig.low.dim} (right) shows the results of the simulations. We see in this setting that our algorithm provides better accuracy in terms of structural Hamming distance in addition to being considerably faster, as the computation times in Table~\ref{tab.runtime_high.dim} indicate.  Appendix \ref{app.subsec.furth.comp} contains an extensive comparison of our algorithm against GIES.

  \begin{figure}[ht]
     \centering
     \includegraphics[scale=.4]{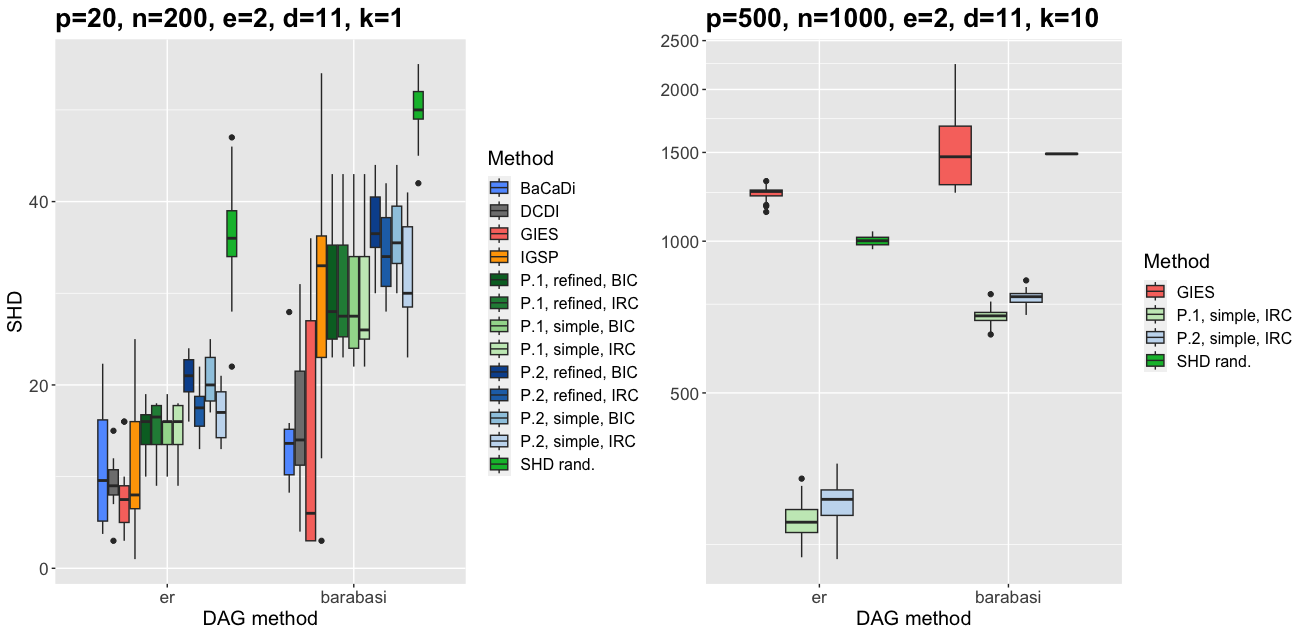}
     \caption{Performance of our algorithm against different baselines in low- and high-dimensional settings on DAGs for random Erd\H{o}s-R\'{e}nyi or Barabasi-Albert graphs with $p=20, 500$ nodes and expected number of edges per node $e=2$. 
    The observational sample sizes are respectively $100$ and $48$, the other samples are evenly distributed in the interventional datasets. Notice that the $y$-axis for the plot on the right is in log-scale.}
     \label{fig.low.dim}
 \end{figure}

To emphasize the point that our algorithms are able
to retain structural information of the original DAG, in
 both the plots in Fig.~\ref{fig.low.dim} we added ``SHD rand.'' as baseline. This shows the SHD between the $\I$-CPDAG of the true DAG and the $\I$-CPDAG of randomly generated polytree. This baseline makes sure to highlight that the better performance of our algorithms in terms of SHD is not given by the sparsity structure of the polytree. 
 The distance from the result of our algorithm to this baseline is a strong indication that even in a misspecified setting where the generating graph is not a polytree, our algorithms are still able to infer information about the original DAG.
\begin{table}[ht]
    \centering
        \caption{Mean runtime on  DAGs with 500 nodes and different number of samples $n$ corresponding to
    Fig.~\ref{fig.low.dim}. 
    }
        \begin{tabular}{l|r|r|r|r|r|r|}
        & \multicolumn{6}{c|}{Runtime in seconds} \\
        \!Method\! &  \multicolumn{2}{c|}{$n=500$}  &  \multicolumn{2}{c|}{$n=1000$} &  \multicolumn{2}{c|}{$n=2000$} \\
         & mean & max & mean & max &mean & max \\
        \hline
        \!P.1  & 8 & 12 & 8 & 13 & 8 & 13 \\
        \!P.2  & 8& 14 & 7.8 & 13 & 8 & 15 \\
        \!GIES &7960 & 14715 & 449 & 909 & 284 & 553 \\
    \end{tabular}

    \label{tab.runtime_high.dim}
\end{table}

\subsection{Protein Expression Data}
\label{sec.gene.exp}
\noindent
We illustrate our algorithm on the well-known protein expression dataset from \cite{sachs:2005}. Similar to \cite{wang:2017}, we processed the dataset into one observational and five one-node interventional datasets with 5846 samples of 11 nodes in total. The CPDAG of the conventionally accepted model of the interactions between nodes serves as ground truth;  this consensus $\I$-CPDAG and the one estimated by our algorithm can be found in the Appendix. Table~\ref{tab.sachs} shows the SHD to the consensus $\I$-CPDAG; the statistics for the other algorithms are taken from \citep[Tab.~3]{brouillard:2020}. Although the ground truth is a DAG with 18 edges, our algorithm's performance is comparable to that of more general methods.

\begin{table}[ht]
    \centering
        \caption{Structural Hamming Distance to the consensus $\I$-CPDAG of the protein network from \cite{sachs:2005}.}
    \begin{tabular}{l|l|l|l|l|l|l} 
    \multicolumn{6}{c}{\hspace{2cm}Method}\\ \hline
          &P.1/P.2 & GIES & IGSP & CAM & DCDI-G &DCDI-DSF  \\ 
         \hline
           SHD &15 & 38 & 18 & 35 & 36 & 33
    \end{tabular}
    \label{tab.sachs}
\end{table}

\section{Conclusion}
\label{sec.concl}
We proposed an approach to learn linear Gaussian polytree models from interventional data where the interventions have known targets. Our two-step approach of first learning the skeleton and then orienting the edges exploits the availability of interventional data at each step, all the while using only local computations with low-dimensional marginals.
We emphasize that our methods are especially well suited for the high-dimensional setting, 
with large graphs but moderate sample size.

To conclude, we highlight topics that emerge as future directions.

{\em Unknown interventions}: 
A natural extension of our work is to allow for unknown intervention targets.
Although the skeleton recovery and collider search procedures apply in this case, 
the remaining orientation subroutines do not because they operate by searching $\I$-identifiable edges; this, in turn, depends on the characterization
of $\I$-Markov equivalence which requires
explicit knowledge of $\I$. The notion of $\Psi$-Markov equivalence, introduced in \citet{jaber:2020}, for 
unknown intervention targets appears to be a reasonable candidate to replace 
the use  of the $\I$-MEC. The reason being that  the $\Psi$-Markov equivalence
class ($\Psi$-Markov EC) is a generalization of the $\I$-MEC in this case \citep[App. D]{jaber:2020}.

{\em Forests}:  We focused on 
connected polytrees, but 
a generalization to disconnected forests of polytrees would be of interest. Undirected forests have been 
studied by \citet{edwards:2010}, but their construction does not carry over naturally to the directed case, and new research is needed.

{\em Hidden variables:}
The algorithms we propose do not address the case where some variables remain hidden.
For purely observational data from a polytree model, 
the hidden variable setting was considered by 
\citet{sepher:2019}.
These authors provide necessary and sufficient conditions for causal structure 
recovery of the polytree with hidden nodes. 
It would be interesting to investigate to what extent
interventional data would improve the identifiability of the polytree structure when such conditions are not met. Similar to the unknown interventions case,
a possible approach would be to replace the $\I$-MEC by the $\Psi$-Markov EC because its level of generality encompasses  hidden variables also.

\section*{Acknowledgements}
This project has received funding from the European Research Council (ERC) under the European Union’s Horizon 2020 research and innovation programme (grant agreement No 883818). DT's PhD scholarship is funded by the IGSSE/TUM-GS via a Technical University of Munich--Imperial College London Joint Academy of Doctoral Studies. ED was supported by the FCT grant 2020.01933.CEECIND, and partially supported by CMUP under the FCT grant UIDB/00144/2020.

\appendix

\counterwithin{figure}{section}
\counterwithin{table}{section}

\section{Notes on Consistency}
\label{app.sec.cons}
 Notation and assumptions: 
\begin{enumerate}[label=\alph*)]
\label{app.assumptions}
    \item Let $\mathcal{G}=(V,E)$, with $|V|=p$, be the polytree that underlies the data-generating distribution.  Let $\mathcal{S}$ be its skeleton,  and let $\I=\{I_1,\dots,I_k\}$ be a conservative set of intervention with $\Sigma_I$ being the covariance matrix associated to  interventional setting $I\in\I$.  We write $\rho_{u,v}^I$ for the correlation between $X_u$ and $X_v$ in the setting $I$.  Finally, let $w=(w_1,\dots,w_k)\in[0,1]^k$, $\sum_j w_j=1$, be the vector giving the probabilities that the $i$-th sample is taken from the interventional setting $I_j$.
    \item Following \cite{hauser:2015}, we assume that we jointly sample the interventional setting $I_i$ and $X^{I_i}_i$, and that 
    \begin{equation*}
    X^{I_i}_i|I_i\sim \mathcal{N}(0,\Sigma_{I_i}).
    \end{equation*}
    \item Let $X=(X^{I_1}_{1},X^{I_2}_2,\dots,X^{I_n}_{n})\in\mathbb{R}^{p\times n}$ be our sample.
    We write $\rho_{u,v}^I$ and $\hat{\rho}_{u,v}^I$ for the population and the sample correlation between $X^I_u$ and $X^I_v$.  Here, the sample quantity is computed using $X^I=(X^I_{i_1},\dots,X^I_{i_{n_I}})$, where $\{i_1,\dots,i_{n_I}\}$ is the set of all indices $j$ with $I_{i_j}=I$.
    \item Let $\rho^I_{m}:=\min_{u,v}|\rho^I_{u,v}|$ and $\rho^I_{M}:=\max_{u,v}|\rho^I_{u,v}|$, where the min and max extend over pairs of nodes $u,v$ that are joined by an edge in the polytree $\mathcal{G}$.  We assume 
    \[
    0<\rho^I_{m}<\rho^I_{M}<1 \ \text{ for all } I\in\I.
    \]
    \item Let $\gamma_I=\rho^I_{m}(1-\rho^I_M)/2$.
    \label{item:bound}
\end{enumerate}

\subsection{Consistency of Skeleton Recovery}
\label{app.consistency.skeleton}
\noindent
In this section we provide finite sample analysis  and prove consistency for the skeleton recovery under the assumptions a)-e) laid above.
The following lemma is a straightforward modification of \citet[Thm.~3.4]{lou:2021}.
\begin{lemma}
\label{app.lem.cl.fail}
If for all $u,v\in V$ and all $I\in\I$ we have 
\[
|\hat{\rho}^I_{u,v}-\rho^I_{u,v}|\leq\gamma_I,
\]
then the Chow-Liu algorithm computed with the weighted mean function outputs the correct skeleton of the tree.
\end{lemma}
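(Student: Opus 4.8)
The plan is to deduce the claim from the maximum-weight-spanning-tree lemma of Rebane and Pearl recalled above: it suffices to show that, on the event in the hypothesis, the aggregated matrix produced by the weighted-mean function is \emph{strictly} $G$-valid, since Kruskal's algorithm then returns exactly the skeleton $\mathcal{S}$. Write $W(u,v)=\sum_{I\in\I}w_I|\rho^I_{u,v}|$ for the population aggregate and $\hat W(u,v)=\sum_{I\in\I}w_I|\hat\rho^I_{u,v}|$ for its empirical version, where the nonnegative weights $w_I$ sum to one. Fixing a triple $u-v-w$ of $\mathcal{G}$, the goal reduces to establishing $\hat W(u,w)<\min\{\hat W(u,v),\hat W(v,w)\}$.

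First I would analyse the population matrix. Since $\mathcal{G}$ is a polytree there is at most one trek between any two nodes, so applying Proposition~\ref{prop.correlations} in each environment $I$ gives $|\rho^I_{u,w}|=0$ when $u-v-w$ is a collider and $|\rho^I_{u,w}|=|\rho^I_{u,v}|\,|\rho^I_{v,w}|$ otherwise. In the non-collider case and for each $I$,
\[
|\rho^I_{u,v}|-|\rho^I_{u,w}|=|\rho^I_{u,v}|\bigl(1-|\rho^I_{v,w}|\bigr)\ \ge\ \rho^I_m\bigl(1-\rho^I_M\bigr)=2\gamma_I,
\]
where I used the edge bounds $0<\rho^I_m\le|\rho^I_{u,v}|,|\rho^I_{v,w}|\le\rho^I_M<1$; the same holds with the roles of $u$ and $w$ exchanged, and in the collider case the difference is $\ge\rho^I_m>2\gamma_I$. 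Aggregating over $I$ with the weights $w_I$ yields the population margins $W(u,v)-W(u,w)\ge 2\bar\gamma$ and $W(v,w)-W(u,w)\ge 2\bar\gamma$, where $\bar\gamma:=\sum_{I\in\I}w_I\gamma_I$.

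Next I would convert the correlation bound of the hypothesis into a bound on $\hat W$. The reverse triangle inequality gives $\bigl||\hat\rho^I_{u,v}|-|\rho^I_{u,v}|\bigr|\le|\hat\rho^I_{u,v}-\rho^I_{u,v}|\le\gamma_I$, so every entry obeys $|\hat W(u,v)-W(u,v)|\le\sum_{I\in\I}w_I\gamma_I=\bar\gamma$. Subtracting this deviation from each of the two aggregates appearing in a difference, for a non-collider triple,
\[
\hat W(u,v)-\hat W(u,w)\ \ge\ \bigl(W(u,v)-W(u,w)\bigr)-2\bar\gamma\ \ge\ 0,
\]
with the collider case strict by the extra slack $\rho^I_m>2\gamma_I$ recorded above. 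This is exactly where the factor $\tfrac12$ in $\gamma_I=\rho^I_m(1-\rho^I_M)/2$ is calibrated: the per-environment population gap $2\gamma_I$ is designed to cover the deviation budget $\gamma_I$ charged separately to $\hat W(u,v)$ and to $\hat W(u,w)$.

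The main obstacle I anticipate is twofold. First, the aggregate $\hat W$ is \emph{not} a correlation matrix of any single distribution---this is why naively pooling the data fails (cf.\ Section~\ref{sec.skeleton})---so the convenient polytree factorisation $|\rho_{u,w}|=|\rho_{u,v}||\rho_{v,w}|$ cannot be applied to $\hat W$ directly; one is forced to argue environment-by-environment and only then aggregate, which is what the decomposition above accomplishes. Second, promoting the inequality from $\ge$ to the \emph{strict} form needed by the Rebane--Pearl lemma requires care in the non-collider case, where the bound is tight only in the degenerate configuration in which one edge attains $\rho^I_m$ and the incident edge attains $\rho^I_M$ simultaneously in every environment; I would rule this out using causal minimality, which makes the per-environment absolute correlations strictly $G$-valid so that the population margin strictly exceeds $2\bar\gamma$, exactly as in the modification of \citet[Thm.~3.4]{lou:2021} that the statement refers to.
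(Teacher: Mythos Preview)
Your argument is the right one and coincides with the approach the paper defers to (it does not give its own proof but cites the result as a straightforward modification of \citet[Thm.~3.4]{lou:2021}): work environment by environment, use the polytree factorisation $|\rho^I_{u,w}|=|\rho^I_{u,v}|\,|\rho^I_{v,w}|$ to get a per-$I$ population margin of $2\gamma_I$, aggregate with the weights, and then absorb the $\bar\gamma$ deviation on each of the two entries.

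There is, however, a genuine gap in the way you close the strictness issue. Causal minimality does not do what you claim. Causal minimality (equivalently, nonzero edge coefficients) is exactly the assumption $\rho^I_m>0$ already recorded in item~d); it guarantees that each $|R^I|$ is strictly $G$-valid, but it does \emph{not} prevent the per-environment margin $|\rho^I_{u,v}|(1-|\rho^I_{v,w}|)$ from equalling $2\gamma_I$. Indeed, in a three-node non-collider path one edge can attain $\rho^I_m$ and the adjacent edge $\rho^I_M$ in every environment while the model remains causally minimal; then $W(u,v)-W(u,w)=2\bar\gamma$ exactly, and on the boundary event $|\hat\rho^I_{u,v}-\rho^I_{u,v}|=\gamma_I$ with the unfavourable signs you get $\hat W(u,v)=\hat W(u,w)$, so strict $G$-validity fails. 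To close this you need something other than causal minimality: either take the hypothesis with a strict inequality $|\hat\rho^I_{u,v}-\rho^I_{u,v}|<\gamma_I$ (which is how the bound is actually used downstream in Proposition~\ref{prop.consistency}, via concentration inequalities that deliver strict control almost surely), or observe that the boundary event has probability zero for continuous data, or appeal to a tie-breaking rule in Kruskal's algorithm. Any of these fixes is easy, but the appeal to causal minimality as written is not one of them.
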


\begin{proposition}
\label{prop.consistency}
 Let $\hat{\mathcal{S}}_n$ be the output of the Chow-Liu algorithm computed with the weighted mean function, on a sample of size $n$, then \begin{align*}
        &\mathbb{P}(\hat{\mathcal{S}_n}\neq\mathcal{S})\leq C_1\binom{p}{2}(n-2)\bigg[\exp(-n)+\sum_{I\in\I}\exp{\bigg[-\bigg(1-(4-\frac{n}{\beta_I})/nw_I\bigg)^2\frac{nw_I}{2}\bigg]}\bigg]\\
        &+C_1\binom{p}{2}(n-2)\bigg[\sum_{I\in\I}\exp{\bigg[-\bigg(1-4/nw_I\bigg)^2\frac{nw_I}{2}\bigg]}\bigg], 
 \end{align*}
 where $C_1$ is a positive constant and $\beta_I = \log\frac{4-\gamma_{I}^2}{4+\gamma_{I}^2}<0$.
\end{proposition}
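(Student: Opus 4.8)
The plan is to reduce skeleton-recovery failure to a union of correlation-estimation errors, and then control those errors while accounting for the fact that the per-setting sample sizes are themselves random. The starting point is Lemma~\ref{app.lem.cl.fail}: if $|\hat{\rho}^I_{u,v}-\rho^I_{u,v}|\le\gamma_I$ holds for every pair $u,v$ and every $I\in\I$, then the weighted-mean Chow--Liu procedure returns $\mathcal{S}$. Contrapositively, $\{\hat{\mathcal{S}}_n\neq\mathcal{S}\}$ is contained in the event that some sample correlation deviates from its population value by more than the corresponding $\gamma_I$. A union bound over the $\binom{p}{2}$ pairs and over the settings $I\in\I$ then gives
\[
\mathbb{P}(\hat{\mathcal{S}}_n\neq\mathcal{S})\;\le\;\binom{p}{2}\sum_{I\in\I}\;\sup_{u,v}\;\mathbb{P}\big(|\hat{\rho}^I_{u,v}-\rho^I_{u,v}|>\gamma_I\big),
\]
so it remains to bound a single per-setting error probability uniformly in $u,v$.

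The crux is that $\hat{\rho}^I_{u,v}$ is computed from the random number $N_I$ of samples that fall in setting $I$; under the joint sampling model of assumption~b), $N_I\sim\mathrm{Binomial}(n,w_I)$. I would therefore condition on $N_I=n_I$ and apply the Kalisch--B\"uhlmann Gaussian correlation concentration inequality \cite[Cor.~1]{kalish:2007} with empty conditioning set, which yields
\[
\mathbb{P}\big(|\hat{\rho}^I_{u,v}-\rho^I_{u,v}|>\gamma_I \,\big|\, N_I=n_I\big)\;\le\;C_1\,(n_I-2)\,\exp\big((n_I-4)\beta_I\big),
\]
with $\beta_I=\log\frac{4-\gamma_I^2}{4+\gamma_I^2}<0$, and then integrate against the law of $N_I$ by partitioning on its value. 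On low-count events, where $N_I$ is too small for the concentration bound to be informative, I bound the conditional probability crudely by $1$ and pay a binomial lower-tail price controlled by the multiplicative Chernoff inequality $\mathbb{P}(N_I\le c)\le\exp\!\big(-(1-c/(nw_I))^2\,nw_I/2\big)$; applied at $c=4$ and at the threshold $c=4-n/\beta_I$ (chosen so that the surviving Kalisch--B\"uhlmann factor is forced down to $\exp(-n)$ via $(c-4)\beta_I=-n$), this produces exactly the two Chernoff exponents in the statement. On the complementary high-count event the estimate $n_I-2\le n-2$ together with that same threshold choice collapses the Kalisch--B\"uhlmann factor to $C_1(n-2)\exp(-n)$. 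Summing over $I$, reinstating the $\binom{p}{2}$ factor, and factoring $C_1\binom{p}{2}(n-2)$ out of every term gives the claimed inequality. The argument is a quantitative, interventional refinement of \cite[Thm.~3.4]{lou:2021}.

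I expect the main obstacle to be precisely the coupling of these two sources of randomness in the integration step: the conditional Kalisch--B\"uhlmann bound degrades toward the trivial value $1$ exactly when $N_I$ is small, while the binomial lower tail governs how often that happens, so the partition thresholds must be chosen so that both estimates are simultaneously small. Getting this bookkeeping to land on the clean exponents above, rather than on a looser moment-generating-function bound for $\mathbb{E}[(N_I-2)\exp((N_I-4)\beta_I)]$, is the delicate part; by contrast the Lemma~\ref{app.lem.cl.fail} reduction and the union bound are routine.
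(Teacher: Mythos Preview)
Your proposal is correct and follows essentially the same route as the paper's proof: reduce via Lemma~\ref{app.lem.cl.fail} to a union of correlation deviations, condition on the (random) per-setting sample sizes, apply the Kalisch--B\"uhlmann concentration bound to produce the factor $(n_I-2)\exp((n_I-4)\beta_I)$, and then split according to whether $n_I$ exceeds the threshold $4-n/\beta_I$ (equivalently, the paper's choice $\delta=\exp(-n)$ in its set $A_\delta$), controlling the small-$n_I$ events by Chernoff. The only cosmetic difference is that the paper conditions on the full vector $(n_1,\dots,n_k)$ before taking the union bound over $I$, whereas you union-bound over $I$ first and then condition on the marginal $N_I$; the resulting estimates coincide.
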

\begin{proof}
From Lemma~\ref{app.lem.cl.fail}, we have  
\begin{align*}
    \mathbb{P}(\hat{\mathcal{S}_n}=\mathcal{S})\ge \mathbb{P}(\displaystyle\bigcap_{\substack{u,v\in V\\ I\in\I}} \{|\hat{\rho^I_{u,v}}-\rho^I_{u,v}|\leq\gamma_I\})
    =1-\mathbb{P}(\bigcup_{\substack{u,v\in V\\ I\in\I}}\{|\hat{\rho}^I_{u,v}-\rho^I_{u,v}|>\gamma_I\}).
\end{align*}
Now, we prove that this lower bound converges to 1 as $n\to\infty$ by showing that \begin{equation*}\mathbb{P}(\bigcup_{\substack{u,v\in V\\ I\in\I}}\{|\hat{\rho}^I_{u,v}-\rho^I_{u,v}|>\gamma_I\})\xrightarrow[]{n\to \infty} 0.\end{equation*} 
Indeed, by conditioning on the size $(n_1,\dots,n_k)$ of the interventional data sets, we can write
\begin{align*}
\mathbb{P}(\bigcup_{\substack{u,v\in V\\ I\in\I}}\{|\hat{\rho}^I_{u,v}-\rho^I_{u,v}|>\gamma_I\})=\sum_{\mathbf{n}=(n_1,\dots,n_k)}\mathbb{P}\bigg(\bigcup_{\substack{u,v\in V\\ I\in\I}}\{|\hat{\rho}^I_{u,v}-\rho^I_{u,v}|>\gamma_I\}\mid \mathbf{n}\bigg)\mathbb{P}(\mathbf{n}).
\end{align*}
Applying a union bound, we get
\begin{multline*}
\mathbb{P}(\bigcup_{\substack{u,v\in V\\ I\in\I}}\{|\hat{\rho}^I_{u,v}-\rho^I_{u,v}|>\gamma_I\})
\le\sum_{u,v\in V, u\not=v}\sum_{\mathbf{n}=(n_1,\dots,n_k)}
    \mathbb{P}\bigg(\bigcup_{I\in\I}\{|\hat{\rho}^I_{u,v}-\rho^I_{u,v}|>\gamma_I\}\mid \mathbf{n}\bigg)\mathbb{P}(\mathbf{n}).
\end{multline*}
Using \citet[Lemma 1]{kalish:2007}, the conditional probabilities can be upper bounded as
\begin{align*}
&\sum_{\mathbf{n}=(n_1,\dots,n_k)}\mathbb{P}\bigg(\bigcup_{I\in\I}\{|\hat{\rho}^I_{u,v}-\rho^I_{u,v}|>\gamma_I\}\mid \mathbf{n}\bigg)\mathbb{P}(\mathbf{n}) \\
&\le 
    C\sum_{\substack{\mathbf{n}=(n_1,\dots,n_k)\\n_I>4, \forall I\in\I}}\sum_{I\in\I}(n_I-2)\exp{\bigg((n_I-4)\beta_I\bigg)}\mathbb{P}(\mathbf{n})
    +c\mathbb{P}(\exists I\mid n_I\leq4)\\
   & \le C(n-2)\sum_{\substack{\mathbf{n}=(n_1,\dots,n_k)\\n_I>4, \forall I\in\I}}\sum_{I\in\I}\exp{\bigg((n_I-4)\beta_I\bigg)}\mathbb{P}(\mathbf{n})+c\mathbb{P}(\exists I\mid n_I\leq4)
\end{align*}
for some positive constants $c,C$ and the negative coefficient 
\[
\beta_I=\log{\frac{4-\gamma_I^2}{4+\gamma_I^2}}.
\]
For all $\delta>0$, define 
\[
A_\delta=\{(n_1,\dots,n_k): \exp((n_I-4)\beta_I)<\delta,\forall I\in\I\},
\]
and let $B_\delta=A_\delta^c$ be its complement.  Then we may bound our probability of interest as
\begin{align*}
\mathbb{P}(\bigcup_{\substack{u,v\in V\\ I\in\I}}\{|\hat{\rho}^I_{u,v}-\rho^I_{u,v}|>\gamma_I\}) 
\le \binom{p}{2}(n-2)\bigg[\sum_{\substack{(n_1,\dots,n_k)\\n_I>4, \forall I\in\I}}\sum_{I\in\I}\exp{\bigg((n_I-4)\beta_I\bigg)}\mathbb{P}(\mathbf{n})+c\mathbb{P}(\exists I\mid n_I\leq4)\bigg]\end{align*} that is upper bounded by
\begin{align*}
    &C_1\binom{p}{2}(n-2)\bigg[(\delta\mathbb{P}(A_\delta)+\mathbb{P}(B_\delta))+c\mathbb{P}(\exists I\mid n_I\leq4)\bigg]\le C_1\binom{p}{2}(n-2)\bigg[(\delta+\mathbb{P}(B_\delta))+c\mathbb{P}(\exists I\mid n_I\leq4)\bigg]
\end{align*}
for some positive constant $C_1$, where in the last inequality we only used $\mathbb{P}(A_\delta)\leq1$.

In order to do so, we rewrite 
\begin{align*}
  B_{\delta}&=\{(n_1,\dots,n_k)\mid\exists I\in\I ,\,n_I<4+\log(\delta)/\beta_I\}\\
  &=\bigcup_{I\in\I}\{n_I<4+\log(\delta)/\beta_I\}.  
\end{align*}
Since $n_I\sim \text{Bin}(n,w_I)$, using Chernoff's bound we find that 
\begin{equation}
    \label{eq:chernoff}
    \mathbb{P}(B_\delta)\le \sum_{I\in\I}\exp{\bigg[-\bigg(1-(4+\frac{\log(\delta)}{\beta_I})/nw_I\bigg)^2\frac{nw_I}{2}\bigg]}.    
\end{equation}
Using $\delta=\exp(-n)$ we get the following bound:
\begin{equation}
    \begin{aligned}
        \label{eq:bound}
    &\mathbb{P}(\hat{\mathcal{S}_n}\neq\mathcal{S})\leq
     C_1\binom{p}{2}(n-2)\bigg[\exp(-n)
    +\sum_{I\in\I}\exp{\bigg[-\bigg(1-(4-\frac{n}{\beta_I})/nw_I\bigg)^2\frac{nw_I}{2}\bigg]}\bigg]\\
    &+C_1\binom{p}{2}(n-2)\sum_{I\in\I}\exp{\bigg[-\bigg(1-4/nw_I\bigg)^2\frac{nw_I}{2}\bigg]},        
    \end{aligned}
\end{equation}
where we bounded $\mathbb{P}(\exists I\mid n_I\leq4)$ with the same Chernoff bound we used in Eq.~\eqref{eq:chernoff}.
\end{proof}

\begin{proposition}
Assume that the number of experiments $|\mathcal{I}|$ does not depend on $n$, and that there are constants $\gamma_*,w_*>0$ such that $\min_{I\in\mathcal{I}} \gamma_I>\gamma_*$ and $\min_{I\in\mathcal{I}} w_I>w_*$ uniformly in $n$. If $\log p = o(n)$, then
\begin{equation*}
    \mathbb{P}(\hat{\mathcal{S}_n}\neq\mathcal{S})\xrightarrow{n\to\infty} 0.
\end{equation*}
\end{proposition}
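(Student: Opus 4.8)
The plan is to return to the intermediate estimate obtained in the proof of Proposition~\ref{prop.consistency}: for every $\delta\in(0,1)$,
\[
\mathbb{P}(\hat{\mathcal{S}}_n\neq\mathcal{S})\le C_1\binom{p}{2}(n-2)\big[\delta+\mathbb{P}(B_\delta)+c\,\mathbb{P}(\exists I: n_I\le 4)\big],
\]
where $B_\delta=\bigcup_{I\in\mathcal{I}}\{n_I<4+\log\delta/\beta_I\}$. This holds for any $\delta$, since the splitting into $A_\delta$ and $B_\delta$ only uses $\beta_I<0$ (on $B_\delta$ every summand $\exp((n_I-4)\beta_I)\le 1$, and $|\mathcal{I}|$ is a fixed factor absorbed into $C_1$). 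I would therefore let $\delta=\delta_n$ depend on $n$ and calibrate its decay so that each of the three terms, after multiplication by $\binom{p}{2}(n-2)$, vanishes.

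First I would pass to uniform constants. Because $\beta_I=\log\frac{4-\gamma_I^2}{4+\gamma_I^2}$ is strictly decreasing in $\gamma_I$, the hypothesis $\gamma_I>\gamma_*$ gives $|\beta_I|\ge\beta_*:=\log\frac{4+\gamma_*^2}{4-\gamma_*^2}>0$ uniformly, and by assumption $w_I\ge w_*>0$; both bounds are uniform in $n$ and $I$ since $|\mathcal{I}|$ is fixed. I would then set $\delta_n=\exp(-\kappa n)$ with a fixed $\kappa\in(0,\beta_* w_*)$. As $\log\delta_n/\beta_I=\kappa n/|\beta_I|$, the normalized threshold defining $B_{\delta_n}$ satisfies
\[
\frac{4+\kappa n/|\beta_I|}{nw_I}\longrightarrow\frac{\kappa}{|\beta_I|\,w_I}\le\frac{\kappa}{\beta_* w_*}<1,
\]
so for all large $n$ it lies below the mean $nw_I$ with a fixed multiplicative gap $\epsilon_0>0$. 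Since $n_I\sim\mathrm{Bin}(n,w_I)$, the Chernoff lower-tail bound is then valid and yields $\mathbb{P}(B_{\delta_n})\le|\mathcal{I}|\exp(-\epsilon_0^2 w_* n/2)$. Applying the same estimate to the constant threshold $4$, which is eventually far below $nw_*$, controls $\mathbb{P}(\exists I: n_I\le 4)$ by an exponentially small quantity as well.

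To conclude I would combine the three bounds. The bracket is at most $\exp(-\kappa n)+C'\exp(-c'' n)$ for constants $C',c''>0$, and $\binom{p}{2}(n-2)\le p^2 n$, so after taking logarithms the full bound is $\exp\!\big(2\log p+\log n-c''' n+O(1)\big)$ for some $c'''>0$. The assumption $\log p=o(n)$ forces the exponent to $-\infty$, giving $\mathbb{P}(\hat{\mathcal{S}}_n\neq\mathcal{S})\to 0$.

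The delicate point is the calibration of $\delta_n$. It must decay slowly enough---sublinearly relative to the typical sizes $nw_I$---that the Chernoff thresholds defining $B_{\delta_n}$ stay strictly below the binomial means; this is precisely where the uniform lower bound $|\beta_I|\ge\beta_*$, hence the assumption $\gamma_I>\gamma_*$, is needed, and it is the reason the more aggressive choice $\delta=\exp(-n)$ appearing in Proposition~\ref{prop.consistency} cannot simply be carried to the limit here. At the same time $\delta_n$ must decay fast enough that $\binom{p}{2}\delta_n\to 0$, which is exactly what $\log p=o(n)$ provides. Once $\kappa$ is fixed in this window, the remaining steps are routine Chernoff estimates.
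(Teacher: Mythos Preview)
Your proof is correct and follows the same route as the paper's: bound $\mathbb{P}(\hat{\mathcal{S}}_n\neq\mathcal{S})$ by $p^2 n$ times an exponentially small quantity via the machinery of Proposition~\ref{prop.consistency}, then invoke $\log p=o(n)$. The paper simply quotes the final display Eq.~\eqref{eq:bound}, which was derived there with the fixed choice $\delta=\exp(-n)$, and asserts that each summand is at most $\exp(-C' n w_*/2)$ for some $C'=C'(\beta_*,w_*)>0$. You instead return to the intermediate estimate and re-select $\delta_n=\exp(-\kappa n)$ with $\kappa<\beta_* w_*$ (your $\beta_*>0$ being the absolute value of the paper's $\beta_*<0$).

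Your caution about the calibration of $\delta$ is well placed and in fact pinpoints a gap the paper glosses over. The multiplicative Chernoff lower-tail bound used to obtain Eq.~\eqref{eq:chernoff} requires the threshold $4+n/|\beta_I|$ to lie below the mean $nw_I$; but $|\beta_I|=\log\tfrac{4+\gamma_I^2}{4-\gamma_I^2}$ is always far below $1$ for admissible $\gamma_I$, so with $\delta=\exp(-n)$ one has $n/|\beta_I|>n\ge nw_I$ and the Chernoff step is not literally justified as written. Your choice $\kappa<\beta_* w_*$ is precisely what restores validity of the tail bound while keeping the decay fast enough to absorb the $p^2$ prefactor under $\log p=o(n)$. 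The remaining steps---union bound over $I$, the analogous estimate for $\{n_I\le 4\}$, and the final logarithmic comparison---are the same in both arguments, so your version is essentially the paper's proof with this repair made explicit.
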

\begin{proof}
    Let $\beta_*=\log{\frac{4-\gamma_*^2}{4+\gamma_*^2}}<0$.  Then $\beta_I<\beta_*<0$ for all $I\in\mathcal{I}$.  Thus, there is a constant $C'\equiv C'(\beta_*,w_*)>0$ such that for all large $n$, all terms on the right-hand side of Eq.~\eqref{eq:bound} are bounded from above by 
    \[
    C_1p^2n\exp{\bigg[-C'\frac{nw_*}{2}\bigg]} < C_1p^2\exp{\bigg[-C_2n \bigg]}
    \]
    for a further smaller constant $C_2=C_2(\beta_*,w_*)$.  Hence the consistency claim holds under the assumption that $\log p=o(n)$.
\end{proof}

\subsection{Consistency of Orientation Procedures}
\noindent
A finite-sample analysis of the orientation procedures is more subtle, and what we present below is instead a discussion of consistency in the setting of growing sample size but fixed dimension.  The difficulties in a more refined analysis lie primarily with 
the BIC-based procedure as it involves tests based on likelihood ratios that are obtained by optimizing a generally nonconvex function.  For the IRC procedure, a high-dimensional analysis could be performed using concentration inequalities for non-central chi-square distributions \citep{ghosh:2021}.

\subsubsection{BIC}

In \cite{hauser:2015}, it is  proven that under the  assumptions a)-e) stated at the beginning of this section of the supplement the model can be parametrized as an exponential family, so the consistency for the orientation part follows from classical consistency results on the BIC criterion for exponential families, see e.g.~\citet[Thm.~5.13]{drton:2009}.
\begin{proposition}
    Let $u\to v\in G$ be directly $G$-identifiable, and let $\mathcal{H}_{u\to v}, \mathcal{H}_{u\leftarrow v}$ be the correctly specified models in which we allow the variance to vary only if there is a possible ancestor of the node that has been intervened upon, while  $\Tilde{\mathcal{H}}_{u\to v}$  and $\Tilde{\mathcal{H}}_{u\leftarrow v}$ are the models in which we relax the variance constraints and let them vary also in the environments in which there are no ancestors that have been intervened upon. Then under generic data-generating distributions in ${\mathcal{H}}_{u\to v}$ we have:
    \begin{equation*}
        \mathbb{P}(l_{\Tilde{\mathcal{H}}_{u\to v}}>l_{\Tilde{\mathcal{H}}_{u\leftarrow v}})\xrightarrow[]{n\to\infty}0.
    \end{equation*}
\end{proposition}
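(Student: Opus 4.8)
The plan is to reduce the statement to the classical consistency of the BIC for regular exponential families. By \cite{hauser:2015}, under assumptions a)--e) the joint law of the pairs $(I_i,X_i^{I_i})$ ranging over all environments is a regular exponential family, and the two relaxed candidates $\Tilde{\mathcal{H}}_{u\to v}$ and $\Tilde{\mathcal{H}}_{u\leftarrow v}$ --- which only free up the variance parameters of $\mathcal{H}_{u\to v}$ and $\mathcal{H}_{u\leftarrow v}$, cf.\ Remark~\ref{rem.mar.variance} --- are themselves exponential subfamilies. Writing $l_{\mathcal{H}}=-\log\hat L_{\mathcal{H}}+\tfrac12\dim(\mathcal{H})\log n$ for the penalized negative log-likelihood minimized in~\eqref{eq:pen:log}, the selection rule prefers the family with smaller $l$. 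By \citet[Thm.~5.13]{drton:2009} this rule selects, with probability tending to one, the family of smallest dimension among those that contain the data-generating law $P_0$. Hence it suffices to prove that, for generic $P_0\in\mathcal{H}_{u\to v}$, the family $\Tilde{\mathcal{H}}_{u\to v}$ is the unique member of $\{\Tilde{\mathcal{H}}_{u\to v},\Tilde{\mathcal{H}}_{u\leftarrow v}\}$ of minimal dimension that contains $P_0$; the asserted convergence $\mathbb{P}(l_{\Tilde{\mathcal{H}}_{u\to v}}>l_{\Tilde{\mathcal{H}}_{u\leftarrow v}})\to0$ is then immediate.

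Containment $P_0\in\Tilde{\mathcal{H}}_{u\to v}$ is automatic since $P_0\in\mathcal{H}_{u\to v}\subseteq\Tilde{\mathcal{H}}_{u\to v}$. To analyse the reversed family I would translate both constraints into statements about population regression coefficients via the simple trek-rule (Proposition~\ref{prop.trek.rule}). Because $u\to v$ is an edge of the polytree it is the unique trek between $u$ and $v$, with top $u$, so $\Sigma^{(I)}_{uv}=\Sigma^{(I)}_{uu}\lambda_{uv}^{(I)}$ in every environment; the invariance $\lambda_{uv}^{(I)}=\lambda_{uv}^{(\emptyset)}$ for $v\notin I$ shows that the regression coefficient $\Sigma^{(I)}_{uv}/\Sigma^{(I)}_{uu}$ of $X_v$ on $X_u$ is constant across $\I_{\dot v}$, which is exactly the defining constraint of $\Tilde{\mathcal{H}}_{u\to v}$. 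The reversed family $\Tilde{\mathcal{H}}_{u\leftarrow v}$ instead forces $\Sigma^{(I)}_{uv}/\Sigma^{(I)}_{vv}$, the regression of $X_u$ on $X_v$, to be constant across $\I_{\dot u}$.

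I would then split according to how $u\to v$ is directly $\I$-identifiable (Definition~\ref{def.idenitf}); in the single-edge setting the relevant mechanism is a separating intervention, i.e.\ $\I_{u,\dot v}\cup\I_{\dot u,v}\neq\emptyset$ (the collider case being handled by the collider-search procedures). If $\I_{\dot u,v}\neq\emptyset$, pick such an environment $I$: there $u$ is not intervened so $\Sigma^{(I)}_{uu}=\Sigma^{(\emptyset)}_{uu}$, while $\lambda_{uv}^{(I)}$ and $\omega_v^{(I)}$ vary freely, and substituting into $\Sigma^{(I)}_{uv}/\Sigma^{(I)}_{vv}$ shows that the reversed-model invariance $\Sigma^{(I)}_{uv}/\Sigma^{(I)}_{vv}=\Sigma^{(\emptyset)}_{uv}/\Sigma^{(\emptyset)}_{vv}$ cuts out a proper algebraic subset of the parameters of $\mathcal{H}_{u\to v}$; hence $P_0\notin\Tilde{\mathcal{H}}_{u\leftarrow v}$ for generic $P_0$, and $\Tilde{\mathcal{H}}_{u\to v}$ is the unique family containing $P_0$. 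If instead $\I_{\dot u,v}=\emptyset$ (so the separation comes only from $\I_{u,\dot v}$), the reversed family is unconstrained in those environments and may still contain $P_0$, but then the parameter counts of Section~\ref{subsec.likelihood.comp} give $\dim\Tilde{\mathcal{H}}_{u\to v}-\dim\Tilde{\mathcal{H}}_{u\leftarrow v}=|\I_v|-|\I_u|=|\I_{\dot u,v}|-|\I_{u,\dot v}|=-|\I_{u,\dot v}|<0$, so $\Tilde{\mathcal{H}}_{u\to v}$ is strictly smaller. In both cases $\Tilde{\mathcal{H}}_{u\to v}$ is the unique minimal-dimension family containing $P_0$, as the reduction requires.

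The main obstacle is the genericity step: one must verify that the reversed-model invariance really defines a proper subvariety of the parameter space of $\mathcal{H}_{u\to v}$, so that its complement is a full-measure set of admissible distributions, and --- crucially --- that relaxing the variance constraints (Remark~\ref{rem.mar.variance}) does not enlarge $\Tilde{\mathcal{H}}_{u\leftarrow v}$ enough to swallow $P_0$; the point is that the binding obstruction is the \emph{regression-coefficient} invariance, which the relaxation leaves untouched. A secondary check is that performing the BIC comparison on the $(u,v)$-marginals alone, as licensed by the polytree factorization, still falls under the exponential-family consistency theorem, for which I would again rely on the parametrization of \cite{hauser:2015} rendering both candidates regular exponential families of the stated dimensions $2|\I|+1+|\I_v|$ and $2|\I|+1+|\I_u|$.
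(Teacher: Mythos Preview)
Your proposal is correct and mirrors the paper's proof: reduce to BIC consistency for exponential families, then split on how the edge is $\I$-directly-identifiable, using in one case that the reversed-model regression-coefficient invariance cuts out a proper algebraic subset of $\mathcal{H}_{u\to v}$ (genericity) and in the other that $\dim\Tilde{\mathcal{H}}_{u\to v}<\dim\Tilde{\mathcal{H}}_{u\leftarrow v}$. One small slip to fix: your assertion $\Sigma^{(I)}_{uu}=\Sigma^{(\emptyset)}_{uu}$ whenever $u\notin I$ is not true in general (an ancestor of $u$ may lie in $I$), but it is not needed---the non-triviality of the constraint already follows from the freeness of $\lambda_{uv}^{(I)}$ and $\omega_v^{(I)}$ when $v\in I$, which you correctly invoke.
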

\begin{proof}
    Consider first the case in which there is an $I_0\in\I$ such that $I_0\cap\{u,v\}=\{u\}$. 
    In this case, it holds that $\mathbb{P}(l_{\Tilde{\mathcal{H}}_{u\to v}}>l_{\Tilde{\mathcal{H}}_{u\leftarrow v}})\xrightarrow[]{n\to\infty}0$ provided the true data-generating distribution belongs to $\mathcal{H}_{u\to v}$ but not the relaxed alternative $\Tilde{\mathcal{H}}_{v\to u}$.  We now claim that the intersection $\mathcal{H}_{u\to v}\cap\Tilde{\mathcal{H}}_{v\to u}$  corresponds to a measure zero subset of $\mathcal{H}_{u\to v}$.  To see this note that even in the larger model $\Tilde{\mathcal{H}}_{v\to u}$ we have $\lambda^{\emptyset}_{v\to u} = \lambda^{I_0}_{v\to u}$, where the involved quantities can be written as rational functions of the model parameters:
    \begin{equation*}
        \lambda^{I}_{v\to u}=\frac{\lambda^I_{u\to v}\sigma^I_u}{(\lambda^I_{u\to v})^2\sigma^I_u+\sigma^I_{v\mid u}},\qquad I={\emptyset, I_0}.
    \end{equation*}
    Hence, imposing the condition $\lambda^{\emptyset}_{v\to u} = \lambda^{I_0}_{v\to u}$ amounts to a \emph{non-trivial} rational equation also on $\mathcal{H}_{u\to v}$, which implies that $\mathcal{H}_{u\to v}\cap\Tilde{\mathcal{H}}_{v\to u}$ is indeed a measure zero set in $\mathcal{H}_{u\to v}$.

    We turn to the second case, where there is no set $I_0\in\I$ with $I_0\cap\{u,v\}=\{u\}$.  Since we assume the orientation of the edge $\{u,v\}$ to be $\I$-directly-identifiable,
    there has to be at least one $I_1$ such that $I_1\cap\{u,v\}=\{v\}$.  This implies that the dimension of $\Tilde{\mathcal{H}}_{u\to v}$ is smaller than that of $\Tilde{\mathcal{H}}_{v\to u}$, so in the event in which the true parameters lie in $\mathcal{H}_{u\to v}\cap\Tilde{\mathcal{H}}_{v\to u}\subseteq \Tilde{\mathcal{H}}_{u\to v}\cap~\Tilde{\mathcal{H}}_{v\to u}$, the BIC will asymptotically select the model of lower dimension with probability converging to 1, which we have proven to be $ \Tilde{\mathcal{H}}_{u\rightarrow v}$.
    \end{proof}
\subsubsection{Invariance of Regression Coefficients (IRC)}
Orientating  edges using the IRC treats each $\I$ 
directly identifiable edge individually. Let $n=n_{I_0}+n_{I}$ be the total
sample size to perform the test in Section~IV-A of the main text. 
To justify the consistency of this test, note that as $n\to \infty $ it is possible to 
choose a sequence of significance levels $\alpha_n$ that decreases to zero suitably slowly, in such a way that the probability of a type I or a type II error both tend to zero. In practice, we treat the significance level as a tuning parameter of the algorithm.

\section{Orientation Procedures: Example and Pseudocode}
\label{app.sec.pseudocode}

The next example demonstrates  each one of the procedures for a DAG on five nodes and a collection of intervention targets. Following the example,  we provide pseudocode to perform each of the procedures outlined in Section~IV.C and Section~IV.D
of the main text.
These procedures rely on several subroutines, these are presented first in
 Algorithms \ref{alg.rec.vstruct},\ref{app.alg.triplet}, and \ref{alg.fndroot}. 
Immediately after we present the pseudocode for the procedures P. 1, P. 2 
in Algorithms \ref{alg.proc.1}, \ref{alg.proc.2}. 

\begin{example} \label{ex.orientall}
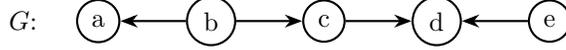
\begin{figure}[!t]
\centering
\begin{tikzpicture}[scale =0.5]
    \node (a) at (-8,0) {$G$:};
\begin{scope}[every node/.style={circle,thick,draw}]

    \node (a) at (-6,0) {a};
    \node (b) at (-3,0) {b};
    \node (c) at (0,0) {c};
    \node (d) at (3,0) {d};
    \node (e) at (6,0) {e};
\end{scope}

\begin{scope}[>={Stealth[black]},
              every edge/.style={draw=black,thick}]
    \path [->] (b) edge node{} (a);
    \path [->] (b) edge node{} (c);
    \path [->] (c) edge node{} (d);
    \path [->] (e) edge node{} (d);
\end{scope}
\end{tikzpicture}
\caption{Example~\ref{ex.orientall} describes the procedures P. 1 and P. 2 to learn the DAG $G$ above with $\I=\{I_1=\emptyset,I_2=\{b\}\}$. The combination $(G,\I)$ is such that every edge in $G$ is $\I$-directly-identifiable.
}\label{app.fig.graph}
\end{figure}

Here we show how to learn the graph in the Fig.~\ref{app.fig.graph} where $\I=\{I_1=\emptyset,I_2=\{b\}\}$ and the skeleton of the graph is already known.
\begin{enumerate}
    \item[(i)] Procedure 1 would first test $X_a\indep X_c,X_b\indep X_d $ and $X_c\indep X_e$, finding that only the last triplet forms a collider and so orienting $c\rightarrow d\leftarrow e$. After this step, to orient the subtree $a-b-c$  Procedure 1 would either compute the BIC scores of the three models $\mathcal{M}_a=a\rightarrow b\rightarrow c$, $\mathcal{M}_b=a\leftarrow b\rightarrow c$ and $\mathcal{M}_c=a\leftarrow b\leftarrow c$, finding that $b$ is the root, or would use the test for the invariance of the regression coefficients for $a-b$ and $b-c$ orienting the two edges independently.
    \item[(ii)] Procedure 2 would orient the edges $a\leftarrow b$ and $b\rightarrow c$ using using either the BIC score or the IRC test. Then, it would orient the edge $c\rightarrow d$ testing $X_b\indep X_d$ and finally orient $d\leftarrow e$ testing $X_c\indep X_e$.  
\end{enumerate}
\end{example}

\begin{algorithm}[!t]\caption{FINDTHEROOT$(\I,X,N,\mathcal{G})$}
\KwData{A set of intervention targets $\I$, a list of datasets $X$,
a list of sample sizes $N$, an undirected graph $\mathcal{G}$.}
\KwResult{A directed tree $\mathcal{G}_d$}
\SetKwFunction{FNDROOT}{FINDTHEROOT}
\label{alg.fndroot}
$\mathcal{G}=(V,E)$\\
$L\leftarrow0$ vector of length $|V|$\\
\For{$v\in V$}{
$L(v)\leftarrow $Likelihood of the model in which $v$ is the root
}
$r\leftarrow\argmax L$\\
$\mathcal{G}_d\leftarrow$ directed tree with skeleton $\mathcal{G}$ and root node $r$
\end{algorithm}

\begin{algorithm}[!t]\caption{COLLIDER$(I,X,N,E,O,COLL)$}
\label{app.alg.triplet} 
\SetKwFunction{Triplet}{COLLIDER}
\KwData{A set of intervention targets $\I$, a list of datasets $X$, 
a list of sample sizes $N$, a list of unoriented edges $E$, a list of oriented edges $O$, an indicator $COLL$ indicates the BIC score to use for the collider search when one edge has already been oriented.}
\KwResult{An augmented list of oriented edges $O$, a list of unoriented edges $E$}
\label{alg.samp.pair_trip}
    \While{$F==$TRUE}{
        $F\leftarrow$TRUE\\
        \For{$u\to v\in O$}{
            \For{$v-w\in E$}{
                \eIf{$X^I_u\indep X^I_w,\forall I\in\I$\tcc*{using $COLL$}}{
                    $E\leftarrow E\setminus\{v-w\}$\\
                    $O\leftarrow O\cup\{w\to v\}$\\
                    $F\leftarrow$FALSE
                    }{
                        $E\leftarrow E\setminus\{v-w\}$\\
                        $O\leftarrow O\cup\{v\to w\}$\\
                        $F\leftarrow$FALSE
                    }
                }
            }
        \For{$u-v-w\in E$}{
            \If{$X^I_u\indep X^I_w,\forall I\in\I$}{
                $E\leftarrow E\setminus\{\{u,v\},\{v,w\}\}$\\
                $O\leftarrow O\cup\{u\to v, w\to v\}$\\
                $F\leftarrow$FALSE
            }
        }
    }
\end{algorithm}

\begin{algorithm}[ht]\caption{RECURSIVE COLLIDER\\$(\I,X,N,E,O,o)$}
\KwData{A set of intervention targets $\I$, a list of datasets $X$,
a list of sample sizes $N$, a list of unoriented edges $E$, a list of oriented edges $O$, an oriented edge $o=(u\rightarrow v)$, an indicator $COLL$ indicating the BIC score to use for the collider search when one edge has already been oriented.}
\KwResult{A list of oriented edges $O$, a list of unoriented edges $E$}
\SetKwFunction{RECOR}{RECURSIVE COLLIDER}
\label{alg.rec.vstruct}
\For{$(v,w)\in E$}{
    \eIf{$X^I_u\indep X^I_w,\forall I\in\I$\tcc*{using $COLL$}}{
       $E\leftarrow E\setminus\{v,w\}$\\
       $O\leftarrow O\cup\{ v\leftarrow w\}$
    }{
       $E\leftarrow E\setminus\{v,w\}$\\
       $O\leftarrow O\cup\{ v\rightarrow w \}$\\
       $E,O\leftarrow$\RECOR${(\I,X,N,E,O,o=(v\rightarrow w),COLL)}$
        }
    }
\end{algorithm}

\begin{algorithm}[ht]\caption{Procedure 1\\$(\I,X,N,V,E,PW,COLL)$}
\KwData{A set of intervention targets $\I$, a list of datasets $X$,
a list of sample sizes $N$, a list of vertices $V$, a list of undirected edges $E$, an indicator $PW$ that indicates if using the BIC scores or the test of the invariance of the regression coefficients, an indicator $COLL$ indicates the BIC score to use for the collider search when one edge has already been oriented.}
\KwResult{Interventional CPDAG $\mathcal{G}$}
\label{alg.proc.1}
$E,O\leftarrow$\Triplet${(\I,X,N,E,O=\emptyset,COLL)}$\\
$\mathcal{G}\leftarrow(V,O)$\\
\eIf{$PW==$"BIC"}{
    $\mathcal{G}\leftarrow(V,E)$\\
    $\mathcal{G}_1,\dots,\mathcal{G}_k\leftarrow$ connected components of $\mathcal{G}$\\
    \For{$i \in \{1,\dots,k\}$}{
        $\mathcal{G}_{d}\leftarrow $\FNDROOT$(\mathcal{G}_i)$\\
        $\mathcal{G}\leftarrow\mathcal{G}\cup\mathcal{G}_d$
    }
    $\mathcal{G}\leftarrow$ $\I$-CPDAG$(\mathcal{G})$
    }{
        \For{$e\in E$}{
                \If{$e$ \text{is directly} $\I$ identifiable}{
                    $o\leftarrow$ directed version of $e$ \tcc*{using $PW$}
                    $E,O\leftarrow$\RECOR$(\I,X,N,E,O,o,COLL)$
                }
            }
            $\mathcal{G}\leftarrow (V,(E\cup O))$
        }
\end{algorithm}

\begin{algorithm}[ht]\caption{Procedure 2\\$(\I,X,N,V,E,PW,COLL)$}
\KwData{A set of intervention targets $\I$, a list of datasets $X$,
a list of sample sizes $N$, a list of vertices $V$, a list of undirected edges $E$, an indicator $PW$ that indicated if using the BIC scores or the test of the invariance of the regression coefficients, an indicator $COLL$ indicated the BIC score to use for the collider search when one edge has already been oriented.}
\KwResult{Interventional CPDAG $\mathcal{G}$}
\label{alg.proc.2}
$O\leftarrow\emptyset$\\
    \For{$e\in E$}{
        \If{$e$ \text{is directly} $\I$ identifiable}{
            $o\leftarrow$ directed version of $e$ \tcc*{using $PW$}
            $E\leftarrow E\setminus \{e\}$\\
            $O\leftarrow O\cup \{o\}$
        }
    }
    $E,O\leftarrow$ \Triplet{$\I,X,N,E,O,COLL$}\\
    $\mathcal{G}\leftarrow (V,(E\cup O))$
\end{algorithm}

\clearpage
\section{Extended Simulation Studies}
\label{app.sec.add.exp}
In this section we present  further analysis of the behaviour of our algorithms.
\subsection{Description of Computational Settings}
\label{app.descr.comp}
When the generated graph is a polytree we first sample a random undirected trees by generating random Prüfer sequences \citep{prufer} and then independently orienting each edge, while the DAGs are generated either via Erd\H{o}s-R\'{e}nyi or Barabasi-Albert model \citep[Chap.~III,VII]{reka:2002}. For the coefficient matrices $\Lambda$, we draw each coefficient uniformly in $(-2, -0.5) \cup (0.5,2)$. To obtain samples, we multiply a Gaussian error vector with the matrix $(I-\Lambda)^{-T}$, the variances of the error vector are sampled uniformly in $[0.05,0.15]$. To ease the comparison with the algorithm, we use the same setup for simulate intervention that is used in \cite{Geenens:2022}. Each intervention changes the mean of the intervened variable in the following way: First a parameter $\hat{\mu}_{k,i}$ is sampled from $\mathcal{N}(0,2)$, then the interventional mean is set to $\mu^I_{k,i}=5\cdot\text{sign}(\hat{\mu}_{k,i})+\hat{\mu}_{k,i}$. The interventional variance is set to $0.5$ for all the intervened nodes. Other types of interventions behave similarly and can be seen in Section~\ref{app.sec.other_ints}. 
The performance of our learning procedures is measured by the structural Hamming distance (SHD) adapted to CPDAGs from \cite{Hauser:20212}. 
All code is available at \cite{github}. 

For all simulations, except the ones for Fig.~1c (left) in the main paper and  Section~\ref{app.sec.unbalanced},
the sample size of all interventional and observational data 
sets is $n/|\mathcal{I}|$ where $n$ is the total sample size. 
For settings where the size of the interventions is one ($k=1$), the intervention targets of all data sets are always different. If $k>1$, this is also the case, although it is possible that two different sets of intervention targets intersect nontrivially.

The baseline of the skeleton recovery, denoted baseObs in 
Fig.~1 of the main text,
corresponds to the Chow-Liu algorithm executed on the weight matrix of only the observational
data. The low-dimensional setting in Fig.~3 of the main text is 20 nodes and all one-node intervention targets. The high-dimensional setting in Fig.~3 of the main text is 500 nodes with 10 interventional datasets that each intervene on 10 randomly chosen nodes.
Each simulation was repeated 20 times in the low dimensional case and 10 times in the high-dimensional case.

\subsection{Skeleton recovery}
\label{app.subsec.skel.rec}
\emph{Pooled data:}
Fig.~\ref{fig.skeleton.flip.intervention} shows the performance of the three skeleton recovery procedures against two baselines; "pooled" that is computed using the Chow-Liu algorithm with the correlation matrix of the pooled data, and "baseObs" that is computed using observational data only. The "Flipped" intervention changes the sign of the regression coefficient. We observe that, when doing perfect interventions or interventions where the sign of the coefficients doesn't change, "pooled" exhibits superior performance than the methods using the aggregation functions. Pooling the data does not perform well when the intervention changes the sign of the coefficients, especially when the number of intervened nodes is high, this shows that the correlation matrix of the pooled data doesn't give rise to a $G$-valid matrix.

\emph{Running times:} Fig.~\ref{fig.skel_runtime} shows  the runtime of the skeleton recovery procedures in the high-dimensional setting ($p=500, n=1000$). The parameters in this figure are similar to those in Fig.~1 of the main text. As expected, mean and Itest are substantially faster than median since they only require arithmetic operations. \vspace{-0.1in}
\begin{figure}[ht!]
    \centering
    \includegraphics[scale =0.7]{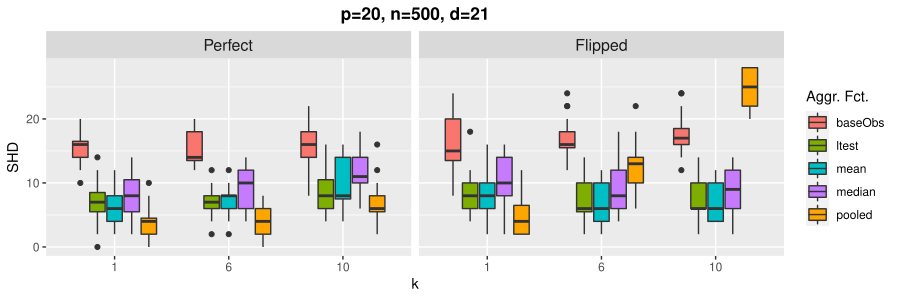}
    \caption{Skeleton recovery with perfect interventions (left) and interventions where the sign of the coefficient is flipped (right).}
    \label{fig.skeleton.flip.intervention}
        \includegraphics[scale =0.7]{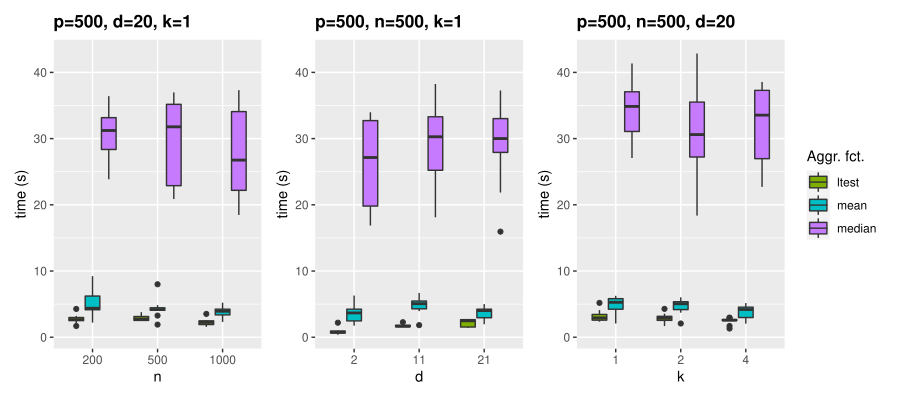}
    \caption{Skeleton recovery with base parameters 500 nodes, 1000 samples and 20 intervention targets with 10 intervened nodes each. The labels Itest, mean, median indicate each of the procedures in the list in  respectively.} 
    \label{fig.skel_runtime}
\end{figure}

\subsection{Analysis of Orientation Recovery}
\label{app.sec.orientation}

In this section we analyze the performance of the different orientation procedures P. 1 and P. 2. For each procedure we have two choices: pairwise orientation with either BIC (Section 4.3.2 of the main text) or IRC (Section 4.3.1 of the main text) and finding the colliders with either simple (Section 4.4.1 of the main text) or refined (Section 4.4.2 of the main text).
We analyze the performance of all eight possible procedures in 
Fig.~\ref{fig.ort_analysis}, where the four green scale colors indicate the four possibilities for P. 1 and the four blue 
scale colors indicate the four possibilities for P. 2. 

In all simulations we use the true underlying skeleton of the polytree.  
As a comparison  we record the mean SHD between the $\I$-CPDAG of the true polytree and a polytree obtained by randomly orienting each edge of the true skeleton. The plots in Fig.~\ref{fig.ort_analysis} show the behaviour of the orientation procedures in low- and high- dimensional settings for varying parameters $p=$number of nodes, $n=$ number of samples, $d=$ number of data sets (i.e. number of interventional datasets plus  the observational dataset), and $k=$number of nodes in each intervention.
All plots in Fig.~\ref{fig.ort_analysis} are for perfect interventions, other types of interventions behave similarly (see Section~\ref{app.sec.other_ints} and Fig.~\ref{fig.ori_diff_interv}).

With an overall glance at Fig.~\ref{fig.ort_analysis}, it seems like P. 2 (blue) behaves better than P. 1 (green). This is especially the case for increasing number of intervened nodes ($k$), c.f. plots (c) and (d). This can be explained since more causal information can be recovered by targeting more nodes in each intervention. Nevertheless, P.1 behaves comparably in settings with few interventions and thus should be further considered. Increasing the number of datasets ($d$), however, does increase SHD, c.f. plot (e) and (f). This could be explained by the observation that increasing the number of data sets for a fixed total sample size decreases the sample size of each individual data set which can lead to more inaccurate results.
Comparing refined and simple collider recovery, we notice that they have similar SHD, however, the simple methods are substantially faster, c.f. plots (g) and (h). Finally, we see that BIC has in general less SHD but is also substantially slower. 

Overall, it seems most reasonable to compare both P.1 and P.2 equipped with simple collider search and single edge orientation with IRC to guarantee accuracy as well as computational efficiency in high-dimensional settings.
\begin{figure}[p]
    \centering
    \includegraphics[scale =0.7]{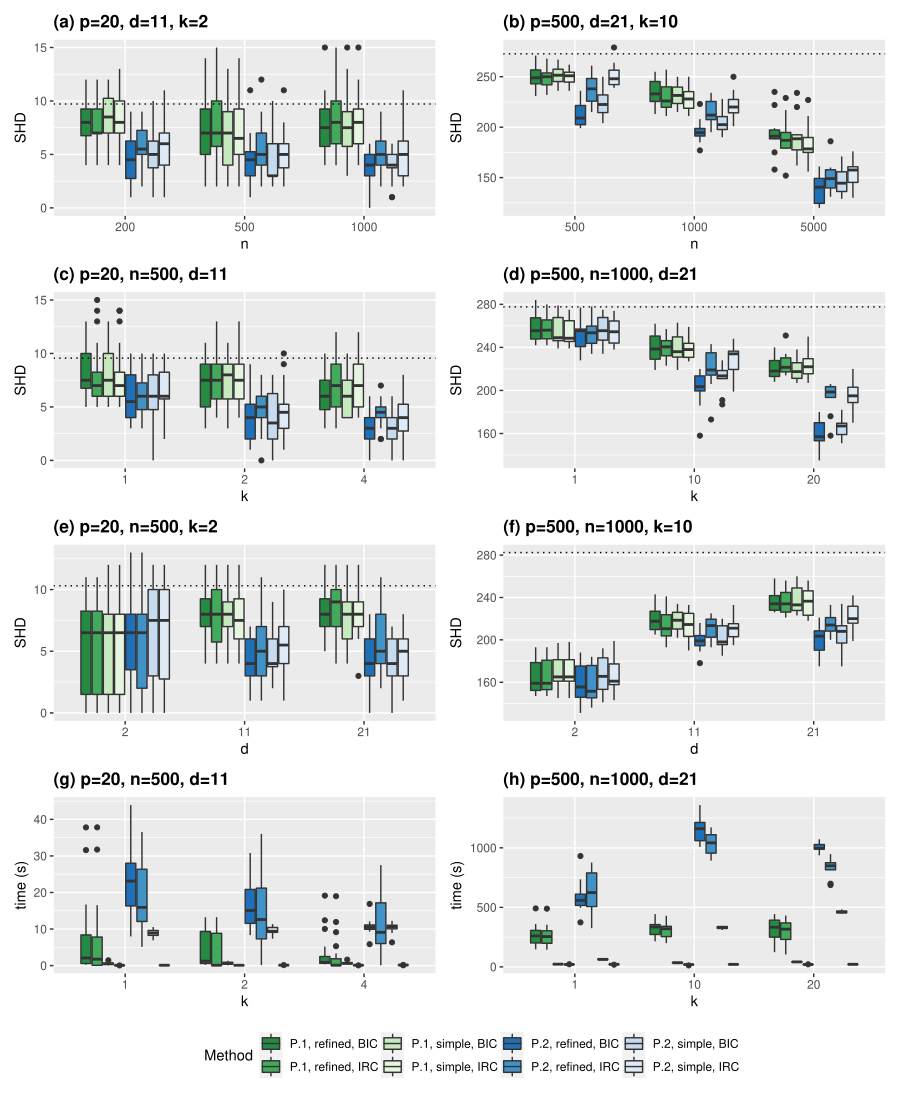}
    \caption{Orientation recovery from true skeleton in low- (left column) and high-dimensional (right column) setting with varying parameters and base parameters given in the title of each plot. The dotted line denotes the baseline of randomly orienting the edges.}
    \label{fig.ort_analysis}
\end{figure}
\subsection{Comparison with Other Types of Interventions (Orientation and Skeleton)}
\label{app.sec.other_ints}
In this section 
we compare the behaviour of our algorithms under different types of interventions, namely: ``Inhibitory"  that sets the regression coefficient $\lambda$ to $0.1\lambda$ and "Flipped" defined as in Section~\ref{app.subsec.skel.rec}. 

\emph{Skeleton recovery.} From Fig.~\ref{fig.skel_diff_interv}, we see that all the aggregation functions behave similarly under different intervention settings, except the pooling data under "Flipped" intervention that we already pointed out in Section~\ref{app.subsec.skel.rec}.

\emph{Orientation recovery:} From the plots in Fig.~\ref{fig.ori_diff_interv}, we see that there is no difference between ``Perfect" and ``Inhibitory" intervention while for "Flipped" intervention P. 2 performs significantly better.
\begin{figure}[!t]
    \centering
    \includegraphics[scale = 0.7]{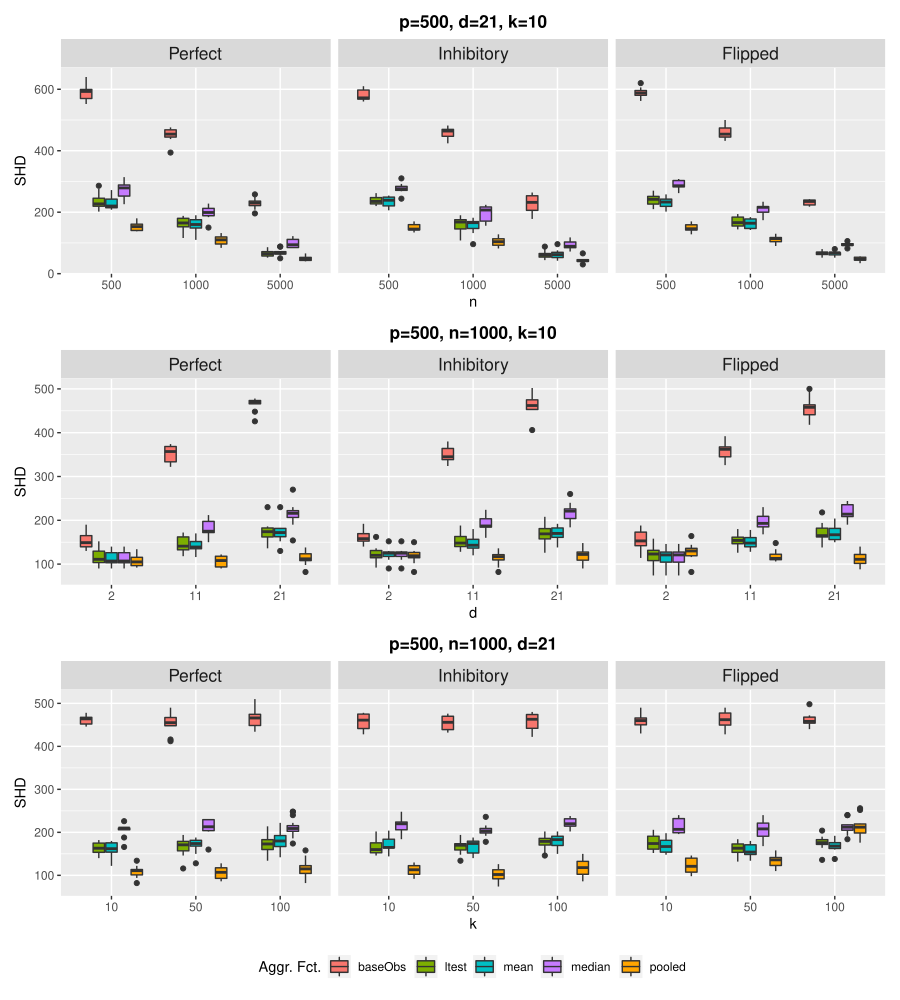}
    \caption{Skeleton recovery with the intervention types perfect (left), inhibitory (middle) and flipped (right) in high-dimensional setting.}
    \label{fig.skel_diff_interv}
\end{figure}

\begin{figure}[p]
    \centering
    \includegraphics[scale = 0.7]{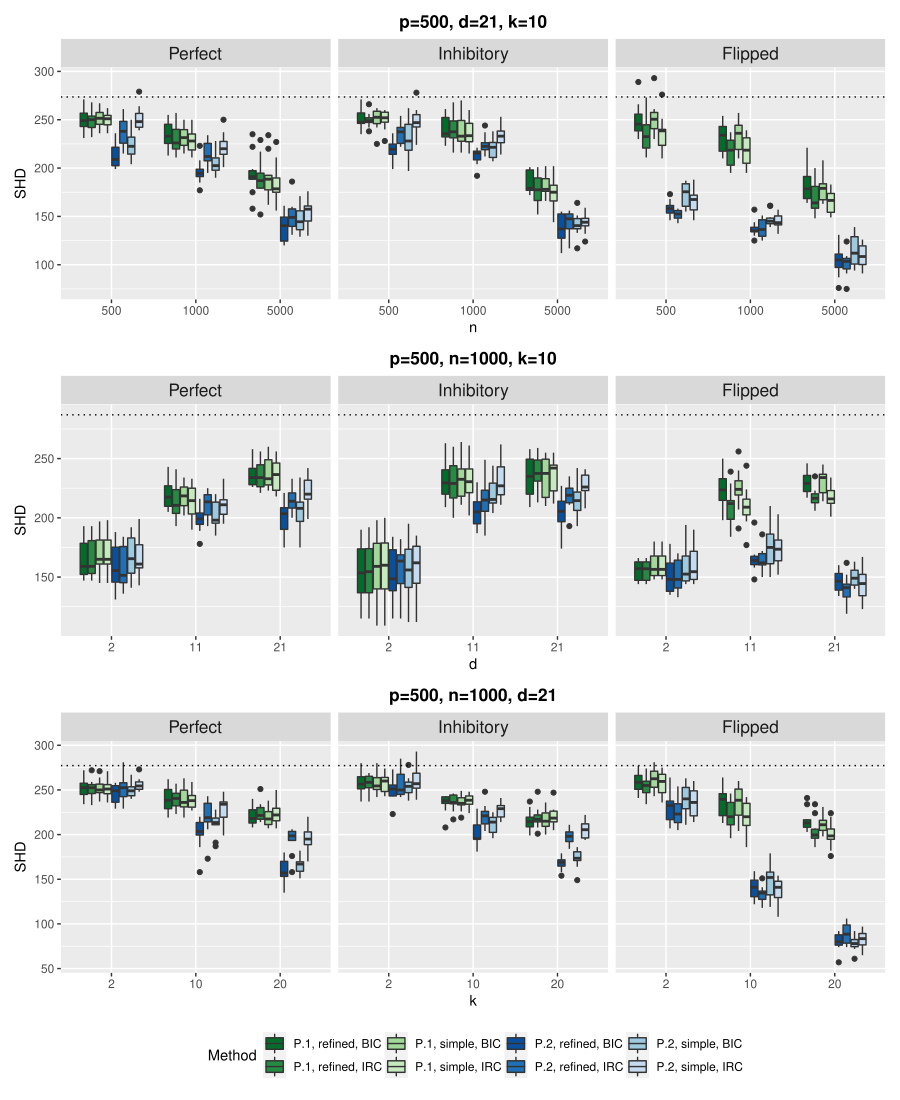}
    \caption{Orientation recovery from true skeleton in high-dimensional settings under perfect (left), inhibitory (middle) and flipped (right) interventions. The dotted line denotes the baseline of randomly orienting edges.}
    \label{fig.ori_diff_interv}
\end{figure}

\clearpage
\subsection{Further Comparisons with GIES}
\label{app.subsec.furth.comp}
In Section~V-B, Fig.~3 and Table~I of the main text,  we compared the performance of GIES with that of P. 1, simple, IRC and 
P. 2, simple, IRC. Here we expand  this analysis by comparing GIES and P. 2 simple, IRC on DAGs  with increasing expected number of edges per node, denoted by $e$. All the DAGs in the section are sampled from an Erd\H{o}s-R\'{e}nyi model. In terms of SHD, we see in Fig.~\ref{fig.dags_other_nbhs} that for $e=1,5$, P. 2, simple, IRC outperforms GIES. For increasing values of $e$ it is expected that P. 2, simple, IRC will get worse in terms of SHD because
 the true underlying DAG is not a polytree. We see however that its performance is comparable that
of GIES. The Table~\ref{app.tab.runtime.dags.high-dim} contains running times for these simulation.
For $e=1,5$  P. 2, simple, IRC is faster than GIES by two orders of magnitude and three for $e=10$, for $e=20$ the GIES algorithm did not finish after 24hrs hence it was aborted. 
\begin{figure}[!t]
    \centering
    \includegraphics[scale =0.7]{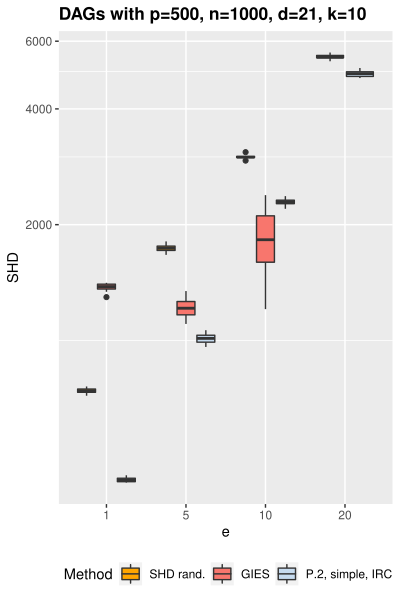}
    \caption{Recovery of the $\I$-CPDAG of DAGs with varying expected number of edges per node $e$ in high-dimensional setting. The box SHD rand. denotes the $\I$-CPDAG of randomly sampled polytrees. GIES on $e=20$ could not be computed in less than 24h.}
    \label{fig.dags_other_nbhs}
\end{figure}

\begin{table}[!t]
    \centering
    \setlength{\tabcolsep}{2pt}
    \caption{Mean and maximum runtime on DAGs with varying expected number of edges per node $e$ using P. 2, simple, IRC.}
        \begin{tabular}{l|r|r|r|r|r|r|r|r|}
        & \multicolumn{8}{c|}{Runtime in seconds} \\
        Method &  \multicolumn{2}{c|}{$e=1$}  &  \multicolumn{2}{c|}{$e=5$} &  \multicolumn{2}{c|}{$e=10$}&   \multicolumn{2}{c|}{$e=20$} \\
         & mean & max & mean & max &mean & max& mean& max \\
         \hline
        P. 2  & 7 & 9  & 7 & 9  & 8 &10 & 8 & 11\\
        GIES &  129 & 143 & 220 & 249 & 865 & 1249 & $>24$h&$>24$h \\
    \end{tabular}

    \label{app.tab.runtime.dags.high-dim}
\end{table}

\begin{table}[!t]
\centering
\caption{Mean and maximum runtime on low dimensional DAGs}
\begin{tabular}{l|r|r|r|}
& \multicolumn{2}{c|}{Runtime in seconds}\\
 Method &\multicolumn{2}{c|}{} \\
  &mean &max\\
  \hline
IGSP & 12.09 & 38.14 \\ 
  GIES & 0.03 & 0.06 \\ 
  P.1, simple, BIC & 0.32 & 0.91 \\ 
  P.1, simple, IRC & 0.05 & 0.07 \\ 
  P.2, simple, BIC & 6.01 & 7.69 \\ 
  P.2, simple, IRC & 0.07 & 0.08 \\ 
  P.1, refined, BIC & 8.40 & 81.23 \\ 
  P.1, refined, IRC & 7.94 & 78.47 \\ 
  P.2, refined, BIC & 12.37 & 26.79 \\ 
  P.2, refined, IRC & 12.92 & 65.46 \\ 
  DCDI & 18337.63 & 29762.54 \\ 
  dibs & 1354.14 & 1372.32 \\ 
\end{tabular}
\label{app.tab.runtime}
\end{table}

\subsection{Unbalanced Sample Sizes}
\label{app.sec.unbalanced}
Throughout the previous analyses, the sample size of each data set, both observational
and interventional, are all equal. In this section we present simulations for the case when the observational data set has more samples than the interventional data sets. In particular, we denote with $w_o$ the percentage of samples that are in the observational dataset, the samples that are not in the observational dataset are evenly distributed across the 20 interventional datasets. The scenario in which $w_o=0.048$ correspond to a balanced sample.

\emph{Skeleton recovery}: From the left plot in Fig.~\ref{app.fig.skeleton.unbalanced}, we see that the performance of Itest worsens when the data become more unbalanced, while the performance of median improves. Moreover, as predictable, we see that if the observational data is big enough the baseline that uses only observational data performs comparably with the other methods.

\emph{Orientation recovery:} From the right plot in Fig.~\ref{app.fig.skeleton.unbalanced} we see that P. 1 is unaffected from the unbalancing of the data, while the performances of P. 2 get slightly worse when the data are more unbalanced.

\begin{figure}[ht]
    \centering
    \hspace*{-1cm}                                                           
\includegraphics[scale =0.7]{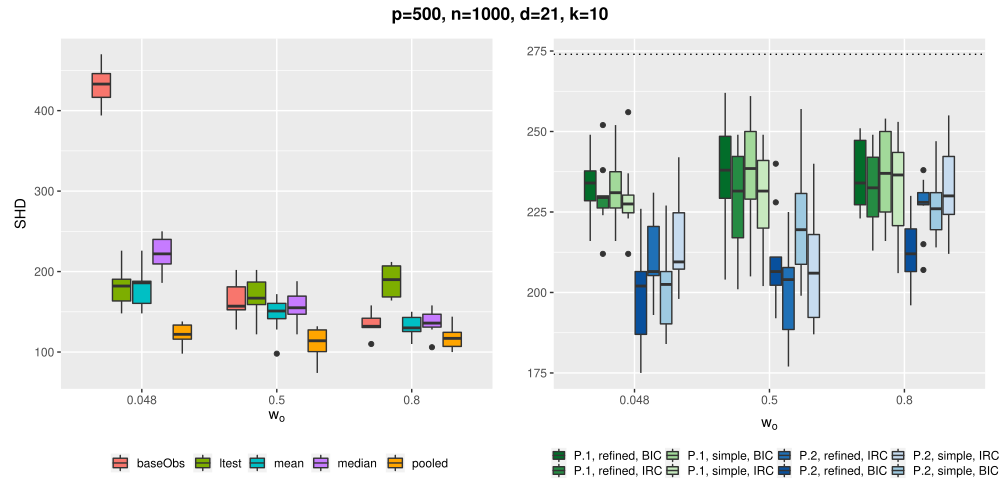}
\caption{Skeleton and orientation recovery  with unbalanced sample sizes. The dotted line denotes the baseline of randomly orienting the edges. The value $w_o$ is the percentage of samples that are in the observational dataset.}
\label{app.fig.skeleton.unbalanced}
\end{figure}

\emph{Comparison with GIES:}
From the plot in Fig.~\ref{app.fig.gies.unbalanced} we see that the performance of GIES is negatively affected from unbalancing of the data, while the performance of our algorithm is overall unaffected. The simulations are performed with DAGs of size $p=1000$ and expected number of edges per node $e=5$.

\subsection{Protein Expression Data}
Table~II in the main text shows the SHD between the consensus graph and the one produced by our algorithm. We show the two graphs in Fig.~\ref{fig:sachs:graph}.

\begin{figure}[ht!]
    \centering
        \centering
      \begin{tikzpicture}[scale = 0.5]
    [
        node distance=2cm,
        every node/.style={circle, draw=black, minimum size=1.5cm}, 
    ]
    
    \newcommand{\numnodes}{11}
    \newcommand{\nodenames}{
        praf,
        pmek,
        p44/42,
        plcg,
        PIP2,
        PIP3,
        pakts473,
        PKA,
        PKC,
        P38,
        pjnk
    }
    
    \foreach \i/\name in {1/praf, 2/pmek, 3/p4442, 4/plcg, 5/PIP2, 6/PIP3, 7/pakts473, 8/PKA, 9/PKC, 10/P38, 11/pjnk}
    {
        \node (\name) at ({360/\numnodes * (\i - 1)}:5cm) {\name};
    }
    
    \draw[->, blue] (praf) -- (pmek);
    \draw[->, blue] (pmek) -- (p4442);
    \draw[->, blue] (plcg) -- (PIP2);
    \draw[->, blue] (plcg) -- (PIP3);
    \draw[->, blue] (PIP2) -- (PKC);
    \draw[->, blue] (PIP3) -- (PIP2);
    \draw[->, blue] (PIP3) -- (pakts473);
    \draw[->, blue] (p4442) -- (pakts473);
    \draw[->, blue] (PKA) -- (praf);
    \draw[->, blue] (PKA) -- (pmek);
    \draw[->, blue] (PKA) -- (p4442);
    \draw[->, blue] (PKA) -- (pakts473);
    \draw[->, blue] (PKA) -- (P38);
    \draw[->, blue] (PKA) -- (pjnk);
    \draw[->, blue] (PKC) -- (PKA);
    \draw[->, blue] (pjnk) -- (PKC);
    \draw[->, blue] (pjnk) -- (PKA);
    \draw[->, blue] (PKA) -- (PKC);
    \draw[->, blue] (PKC) -- (PKA);
    
    \end{tikzpicture}
        \centering
                \begin{tikzpicture}[scale = 0.5]
    [
        node distance=2cm,
        every node/.style={circle, draw=black, minimum size=1.5cm}, 
    ]
    
    \newcommand{\numnodes}{11}
    \newcommand{\nodenames}{
        praf,
        pmek,
        p44/42,
        plcg,
        PIP2,
        PIP3,
        pakts473,
        PKA,
        PKC,
        P38,
        pjnk
    }
    
    \foreach \i/\name in {1/praf, 2/pmek, 3/p4442, 4/plcg, 5/PIP2, 6/PIP3, 7/pakts473, 8/PKA, 9/PKC, 10/P38, 11/pjnk}
    {
        \node (\name) at ({360/\numnodes * (\i - 1)}:5cm) {\name};
    }
    \draw[->, blue] (praf) -- (pmek);
    \draw[->, blue] (PIP2) -- (pmek);
    \draw[->, blue] (PIP2) -- (plcg);
    \draw[->, blue] (PIP3) -- (PIP2);
    \draw[->, blue] (p4442) -- (pakts473);
    \draw[->, blue] (pakts473) -- (PKA);
    \draw[->, blue] (PKA) -- (pmek);
    \draw[->, blue] (P38) -- (PKC);
    \draw[red] (P38) -- (pjnk);

        \end{tikzpicture}
    \caption{On the top, the consensus $I$-CPDAG for the Sachs data, and on the bottom, the one estimated by our algorithm. Different versions of our algorithm output the same result in this case.}
    \label{fig:sachs:graph}
\end{figure}
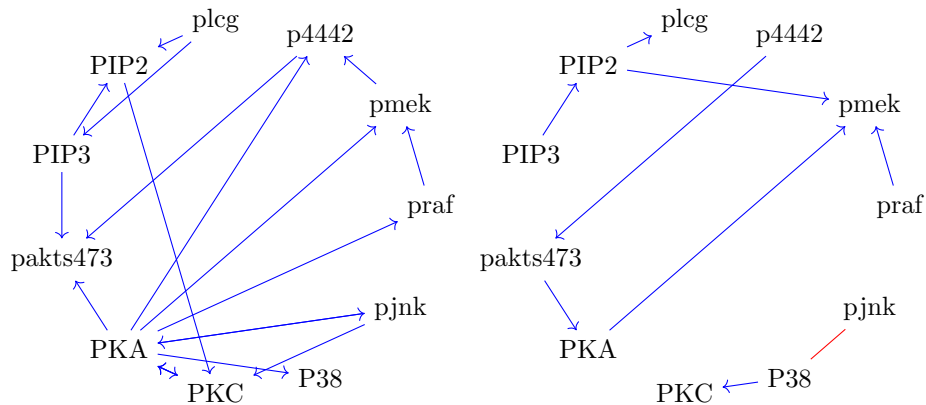

\begin{figure}[ht]
\centering
\includegraphics[scale =0.7]{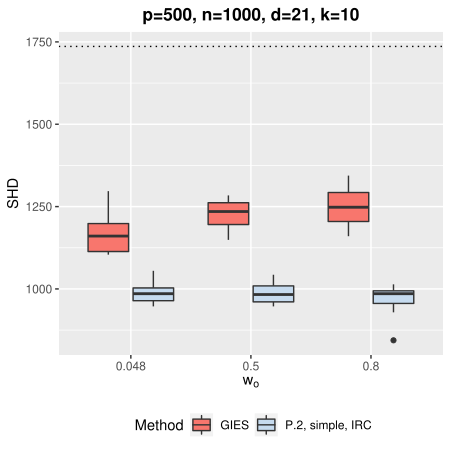}
\caption{Comparison between P. 2, simple, IRC and GIES on DAGS with expected number of edges per node $e=5$. The dotted line denotes the baseline of randomly orienting the edges.}
\label{app.fig.gies.unbalanced}
\end{figure}

\clearpage

\bibliography{ref}
\end{document}